\documentclass{article}

\PassOptionsToPackage{numbers, compress}{natbib}
 \usepackage[preprint]{neurips_2026}

\usepackage[utf8]{inputenc}
\usepackage[T1]{fontenc}
\usepackage{hyperref}
\usepackage{url}
\usepackage{booktabs}
\usepackage{amsfonts}
\usepackage{nicefrac}
\usepackage{xcolor}

\usepackage{amsmath}
\usepackage{amssymb}
\usepackage{mathtools}
\usepackage{amsthm}
\usepackage{pifont}
\usepackage{multirow}
\usepackage{makecell}
\usepackage{algorithm}
\usepackage{algorithmic}
\usepackage{float}
\usepackage{wrapfig}
\usepackage{caption}
\usepackage{fvextra}
\usepackage{float}
\usepackage[most]{tcolorbox}
\tcbuselibrary{breakable}
\DefineVerbatimEnvironment{SampleVerbatim}{Verbatim}{breaklines,breakanywhere,breaksymbolleft={},breaksymbolright={},fontsize=\small}
\newtcolorbox{samplebox}[1]{
  width=\textwidth,
  colback=white,
  colframe=black!25,
  coltitle=black,
  colbacktitle=black!8,
  fonttitle=\bfseries,
  title=#1,
  boxrule=0.4pt,
  arc=2pt,
  left=8pt,
  right=8pt,
  top=6pt,
  bottom=6pt
}

\usepackage{tocloft} 
\usepackage{bm,bbm}

\newcommand{\x}{\mathbf{x}}

\newcommand{\z}{\mathbf{z}}
\newcommand{\y}{\mathbf{y}}

\newcommand{\cat}{\mathrm{Cat}}

\newcommand{\onehots}{\mathcal{V}}
\newcommand{\vocabsize}{V}

\newcommand{\at}{\alpha_{t}}
\newcommand{\as}{\alpha_{s}}
\newcommand{\atdot}{\dot{\at}}

\usepackage[dvipsnames,table]{xcolor}
\definecolor{ourblue}{rgb}{0.368,0.507,0.71}
\definecolor{ourorange}{rgb}{0.881,0.611,0.142}
\definecolor{ourgreen}{RGB}{90, 170, 100}
\definecolor{ourred}{rgb}{0.923,0.386,0.209}
\definecolor{ourviolet}{rgb}{0.528,0.471,0.701}
\definecolor{ourbrown}{rgb}{0.772,0.432,0.102}
\definecolor{ourlightblue}{rgb}{0.364,0.619,0.782}
\definecolor{ourdarkgreen}{rgb}{0.572,0.586,0.}
\definecolor{url}{HTML}{d95225}
\definecolor{bloodred}{HTML}{B00000}
\definecolor{ourgray}{HTML}{36454F}

\definecolor{ourcyan2}{rgb}{0.125,0.722,0.804}
\definecolor{ourred2}{rgb}{0.863,0.184,0.047}
\definecolor{ouryellow2}{cmyk}{0,0.16,1.0,0.07}
\definecolor{ourviolet2}{cmyk}{0.55,0.56,0,0.47}
\definecolor{ourorange2}{cmyk}{0,0.46,0.89,0.11}

\definecolor{figblue}{HTML}{4169E1}
\definecolor{figgreen}{HTML}{33CD32}
\definecolor{figorange}{HTML}{FFA503}
\definecolor{figred}{HTML}{FF0000}
\definecolor{figgray}{HTML}{808080}

\def\eqref#1{equation~\ref{#1}}

\def\1{\bm{1}}

\DeclareMathAlphabet{\mathsfit}{\encodingdefault}{\sfdefault}{m}{sl}
\SetMathAlphabet{\mathsfit}{bold}{\encodingdefault}{\sfdefault}{bx}{n}

\newcommand{\E}{\mathbb{E}}

\newcommand{\KL}{D_{\mathrm{KL}}}

\newcommand{\mask}{\boldsymbol{m}}

\DeclareMathOperator*{\argmax}{arg\,max}

\DeclareFontFamily{OT1}{pzc}{}
\DeclareFontShape{OT1}{pzc}{m}{it}{<-> s * [1.10] pzcmi7t}{}
\DeclareMathAlphabet{\mathpzc}{OT1}{pzc}{m}{it}

\newcommand{\method}{$\mathpzc{ProSeCo}$}
\newcommand{\masktoken}{\texttt{[M]}}

\newcommand{\xL}{\x^{1:L}}
\newcommand{\xl}{\x^{\ell}}

\newcommand{\zt}{\z_t}
\newcommand{\ztL}{\zt^{1:L}}
\newcommand{\ztl}{\zt^{\ell}}

\newcommand{\zs}{\z_s}

\newcommand{\yL}{\y^{1:L}}
\newcommand{\yl}{\y^{\ell}}
\newcommand{\yt}{\y_t}
\newcommand{\ytL}{\yt^{1:L}}
\newcommand{\ytl}{\yt^{\ell}}

\newcommand{\sg}{\mathrm{sg}}

\newcommand{\hzt}{\hat{\z}_t}
\newcommand{\hztL}{\hzt^{1:L}}

\newcommand{\hx}{\hat{\x}}
\newcommand{\hxL}{\hx^{1:L}}

\hypersetup{
    final,
    colorlinks,
    linkcolor=ourred,
    citecolor=ourblue,
    urlcolor=url,
}

\newcommand{\xmark}{\ding{55}}

\usepackage[capitalize,noabbrev]{cleveref}

\theoremstyle{plain}
\newtheorem{theorem}{Theorem}[section]
\newtheorem{proposition}[theorem]{Proposition}

\theoremstyle{definition}

\theoremstyle{remark}

\title{Learn from Your Mistakes:\\Self-Correcting Masked Diffusion Models}

\begin{document}

\author{%
    Yair Schiff$^{~\textmd{1}, *}$
    \quad
    Omer Belhasin$^{~\textmd{2}, *}$
    \quad
    Roy Uziel$^{~\textmd{2}}$
    \quad
    Guanghan Wang$^{~\textmd{1}}$
    \quad
    Marianne Arriola$^{~\textmd{1}}$
    \AND
    Gilad Turok$^{~\textmd{1}}$
    \quad
    Ran Zilberstein$^{~\textmd{2}}$
    \quad
    Michael Elad$^{~\textmd{2}, \dagger}$
    \quad
    Volodymyr Kuleshov$^{~\textmd{1}, \dagger}$
    \AND
    $^{1}$ \textmd{Cornell}
    \quad
    $^{2}$ \textmd{NVIDIA}
    \\
    $^{*}$ \textmd{Equal contribution}
    \quad
    $^{\dagger}$ \textmd{Equal senior authorship}
}

\maketitle

\begin{abstract}
Masked diffusion models (MDMs) have emerged as a promising alternative to autoregressive models, enabling parallel token generation while achieving competitive performance. Despite these advantages, MDMs face a fundamental limitation: once tokens are unmasked, they remain fixed, leading to error accumulation and ultimately degrading sample quality. We address this by proposing a framework that trains a model to perform both unmasking and correction. By reusing outputs from the MDM denoising network as inputs for corrector training, we train a model to recover from potential mistakes. During generation we apply additional corrective refinement steps between unmasking ones in order to change decoded tokens and improve outputs. We name our training and sampling method $\mathpzc{Pro}$gressive $\mathpzc{Se}$lf-$\mathpzc{Co}$rrection  (\method) for its unique ability to iteratively refine an entire sequence, including already generated tokens. We conduct extensive experimental validation across multiple conditional and unconditional tasks, demonstrating that \method~yields better quality-efficiency trade-offs (up to $\sim$4$\mathrm{x}$ faster sampling) and enables inference-time compute scaling to further increase sample quality beyond standard MDMs (up to $\sim$1.2$\mathrm{x}$ improvement on benchmarks).
We provide the code\footnote{Code: \url{https://github.com/kuleshov-group/proseco}}, model weights, and blog post
on the project page: \url{https://proseco-discrete-diffusion.github.io}.
\end{abstract}

\begin{figure}[ht!]
\centering
\hspace{-6em}
\begin{minipage}[c]{0.7\linewidth}
    \includegraphics[width=0.85\linewidth]{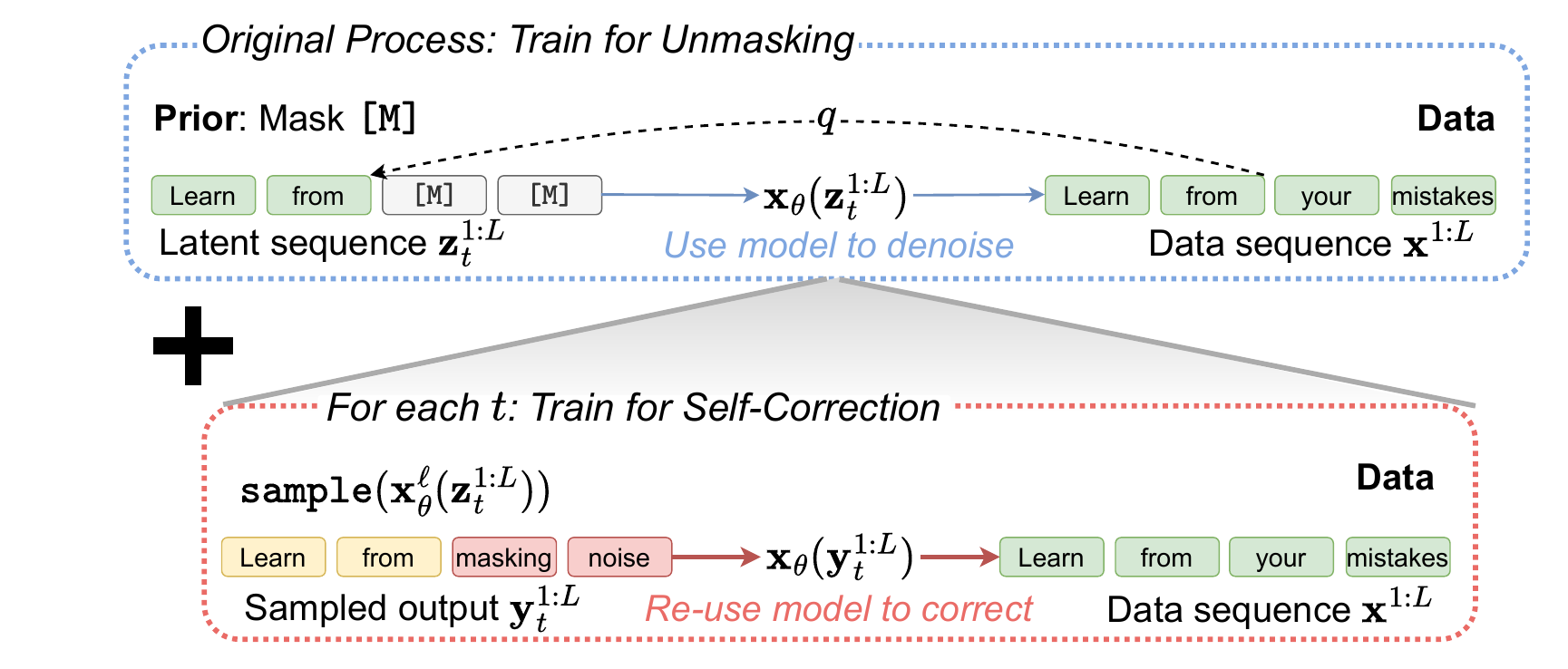}
    \label{fig:graphical_abtract_schematic}
\end{minipage}
\hspace{-4.5em}
\begin{minipage}[c]{0.3\linewidth}
    \includegraphics[width=1.5\linewidth]{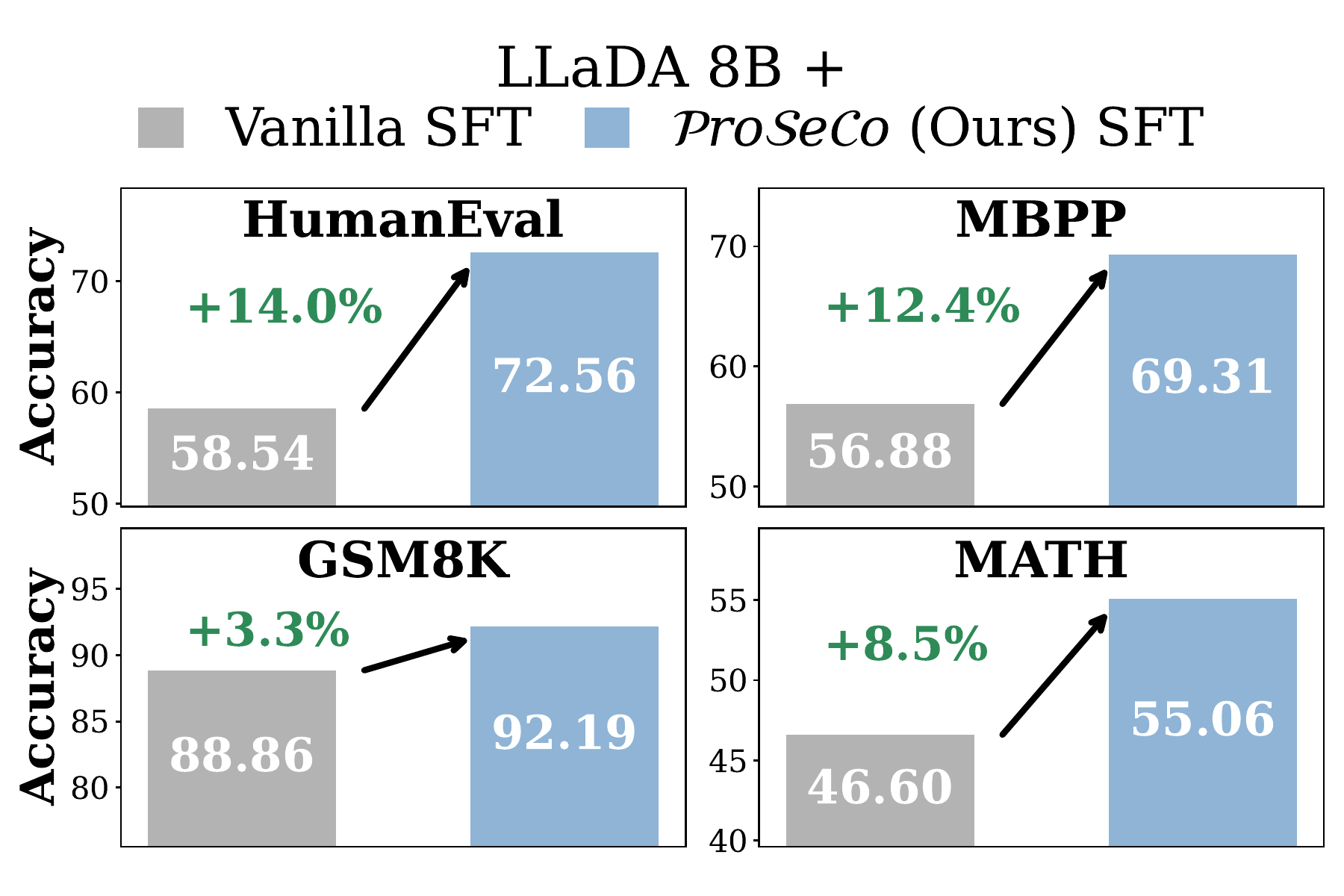}
\label{fig:graphical_abstract_results}
\end{minipage}
\vspace{-1em}
\caption{(\textit{Left}) Training \method: The original process trains for generation via unmasking.
For every timestep in the masking process, we also train for self-correction: undoing corruptions that arise from sampling from unmasking predictions.
\textit{(Right)} Using our method to supervised fine-tune (SFT) the 8B parameter LLaDA model \citep{nie2025large} significantly outperforms vanilla masked diffusion SFT.
}
\end{figure}

\section{Introduction}\label{sec:intro}
Masked diffusion models (MDMs) \citep{loudiscrete, ou2024your, sahoo2024simple, shi2024simplified} have emerged as a powerful paradigm for discrete data generation and offer a compelling alternative to autoregressive (AR) models.
Treating generation as a denoising process that gradually unmasks tokens in parallel, MDMs can achieve efficiency gains and maintain quality across various domains,
and have even demonstrated competitive performance from the 8B scale \citep{nie2025large} up to 100B parameters \citep{bie2025llada2}.

However, a fundamental limitation persists for MDMs: once a token is unmasked, it remains fixed for the duration of the generation process.
Consequently, errors made during parallel decoding inevitably accumulate, leading to distributional drift and degraded sample quality.
While recent work has begun to explore error correction for MDMs \citep{huang2025don, kim2025fine, lezama2022discrete, liu2026teach, wang2025remasking, zhao2024informed}, efficiently identifying which tokens require modification and altering them remains  a significant challenge.

In this work, we address this limitation by proposing a principled framework that equips MDMs with the inherent ability to both decode and correct.
Our key insight is to treat model-generated outputs as corrupted versions of the true data, where errors represent a form of noise that can be undone through a nested corrector loop.
Our method therefore trains a model to recover the clean signal from its own potentially mistaken outputs, enabling it to learn from and correct its own failure modes.
We implement this training via a simple additional cross-entropy loss term added to standard MDM objectives.
During inference, we leverage this capability by interleaving corrective steps in between standard unmasking steps, allowing the model to dynamically refine and ``self-correct" outputs.
The training and inference modifications we make to existing MDM algorithms are minimal, yet lead to marked gains. 
We name our approach $\mathpzc{Pro}$gressive $\mathpzc{Se}$lf-$\mathpzc{Co}$rrection (\method),
since our method has the ability to iteratively refine all positions in a sequence, including ones already unmasked.

On math and coding benchmarks, our method significantly improves standard fine-tuning of large MDMs, enabling up to $\sim$4$\mathrm{x}$ faster generation without quality degradation, and up to $\sim$1.2$\mathrm{x}$ increase in benchmark accuracy.
For guided generation, we demonstrate that recovering from mistaken tokens improves the Pareto frontier of sample quality and property maximization.
Finally, we also demonstrate that for unconditional generation, our method improves over MDMs and other proposed correctors in terms of generating fluent text without collapsing the diversity of generated outputs.
Across all experiments we demonstrate that \method~better trades-off quality and speed, and can also benefit from inference-time scaling to further increase sample quality.

In summary, our contributions are as follows:
    1) We present a framework that jointly trains for decoding masked tokens and correcting mistakes.
    2) We provide easy-to-implement training and sampling algorithms that entail only minor additions to standard MDMs.
    3) We conduct comprehensive experiments demonstrating that \method~outperforms quality-efficiency trade-offs of baseline methods and enables inference-time compute scaling to further improve quality beyond MDMs.

\section{Background}\label{sec:background}
\textbf{Discrete Diffusion}~
Diffusion is a paradigm for generative modeling where a denoising process $p_\theta$ is trained to undo a pre-defined corruption process $q$ \citep{sohl2015deep, song2019generative, song2020score}.
Starting from data $\x \sim q_{\text{data}}$, the corruption process produces latent variables $\zt \sim q(\zt \mid \x)$ for $t \in [0, 1]$, which increasingly move further from the data and towards noise, as $t$ increases.

Adapting diffusion models to discrete data requires a corruption process over the space of sequences of tokens with values in a finite vocabulary \citep{austin2021structured}. 
To denote this data, we let $\mathbf{x}, \mathbf{z}_t \in \onehots$, where $\onehots := \{0, 1\}^\vocabsize \subset \Delta^\vocabsize$, and $\Delta^\vocabsize$ denotes the probability simplex over $\vocabsize$ categories. 
We use superscripts to denote the sequence dimension; for example, $\mathbf{x}^{1:L} \in \onehots^L$ represents a sequence of tokens $(\mathbf{x}^1, \dots, \mathbf{x}^L)$, where token $\mathbf{x}^\ell \in \onehots$, for $\ell \in \{1, \dots, L\}$.

\textbf{Masked Diffusion Models}~
A promising instantiation of the discrete diffusion paradigm is the recent line of masked diffusion models (MDMs; \cite{loudiscrete, ou2024your, sahoo2024simple, shi2024simplified}).
In this framework, the corruption process is characterized by marginals that interpolate between data and noise:
$q(\zt \mid \x) = \cat(\zt; \at \x + (1-\at) \mask),$
where $\mask$ denotes the one-hot vector for a special \masktoken~token and $\at := \alpha(t) \in [0, 1]$ is a noise schedule that is monotonically decreasing in $t$.
In MDM, the corruption process is defined to be `absorbing,' meaning that once a token transitions to the masked state, it remains in this state.

Diffusion models are trained via a variational bound on the negative log-likelihood (NLL).
This bound encourages the learned approximate posterior $p_\theta(\z_s \mid \zt)$ to match the true one $q(\z_s \mid \x, \zt)$, for $s < t$ \citep{sohl2015deep}.
For MDMs, the true posteriors take the following form \citep{austin2021structured, sahoo2024simple}:
\begin{align}\label{eq:mdlm_true_posterior}
    q(\z_s \mid \x, \zt \neq \mask) =
        \cat(\z_s; \zt), \quad
    q(\z_s \mid \x, \zt = \mask) =
        \cat\left(\z_s; \tfrac{\as - \at}{1-\at}\x + \tfrac{1-\as}{1-\at}\mask\right).
\end{align}
A common parameterization for $p_\theta(\z_s \mid \zt)$ replaces $\x$ in (\ref{eq:mdlm_true_posterior}) with the output of a neural network: $\x_\theta(\zt) \in \Delta^\vocabsize$.
Taking the continuous-time limit of the diffusion process, the variational objective for MDMs simplifies to \citep{ou2024your, sahoo2024diffusion, shi2024simplified}:
\begin{align}\label{eq:mdlm_nelbo}
    \mathcal{L}^{\mathpzc{MDM}}_{\theta} :=
    \E_{q_{\text{data}}}\int_0^1\E_{q_t} \sum_{\ell=1}^L \delta_{\zt^\ell, \mask} \frac{\atdot}{1-\at}\log \langle \xl_\theta(\zt^{1:L}), \xl \rangle dt,
\end{align}
where $\atdot$ denotes the time derivative of the noise schedule, $\delta_{\mathbf{a}, \mathbf{b}} := \delta(\mathbf{a}, \mathbf{b})$ is the Kronecker delta function that evaluates to $1$ if $\mathbf{a} = \mathbf{b}$, $\E_{q_{\text{data}}}$ refers to the expectation over drawing data samples $\x^{1:L}\sim q_{\text{data}}$, and $\E_{q_t}$ is the expectation over drawing noisy latents from the forward marginals $\zt^{1:L}\sim q(\zt^{1:L}\mid \x^{1:L})$.
We optimize this objective via stochastic gradient descent using Monte Carlo samples for the expectations and $t \sim \mathcal{U}[0,1]$ to approximate the integral.

A key drawback of this paradigm is that the denoising network $\x_\theta$ does not learn to modify unmasked inputs $\zt \neq \mask$.
Without the ability to correct mistakes, during generation, errors propagate and accumulate over time, causing the sampling trajectory to deviate from the true data distribution.

\section{Self-Correcting Masked Diffusion Models}\label{sec:methods}
In this work, we aim to equip MDMs with the ability to alter previously decoded tokens.
Our approach is to have a single model that can act in two `modes': when inputs contain masked tokens, the model's role is to unmask; when inputs contain all non-mask tokens, the model operates in `corrector' mode and can update already generated positions.

Below, we present an augmented objective that trains models to jointly unmask and correct mistaken unmasked tokens.
Our key insight is that during generation, $\x_\theta$ produces unmasked tokens at each position, some of which align with the true data distribution and some of which potentially contain errors or distributional misalignment.
A subset of these tokens are then carried into the next round of iterative refinement.
We can interpret outputs of $\x_\theta$ as corrupted sequences from the data distribution, where certain tokens have been replaced by incorrect ones sampled from $\x_\theta$.
This perspective motivates our formulation of model-generated sequences serving as inputs for a corrector model.

In Section \ref{subsec:methods_corrector_loss}, we introduce a straightforward corrector loss that we add to the MDM objective from (\ref{eq:mdlm_nelbo}), and we detail a training algorithm to efficiently and jointly train a model to unmask and correct decoding errors.
Finally, in Section \ref{sec:methods_sampling}, we provide a sampling algorithm that leverages these abilities to further improve sample quality by interleaving error correction predictions and unmasking steps.
Owing to this enhanced ability to iteratively refine any part of the sequence, including already decoded tokens, we dub our method $\mathpzc{Pro}$gressive $\mathpzc{S}$elf-$\mathpzc{Co}$rrection (\method).

\begin{figure}[t!]
    \centering
    \hspace{-2em}
    \begin{minipage}[c]{0.32\linewidth}
    \includegraphics[width=1.1\linewidth]{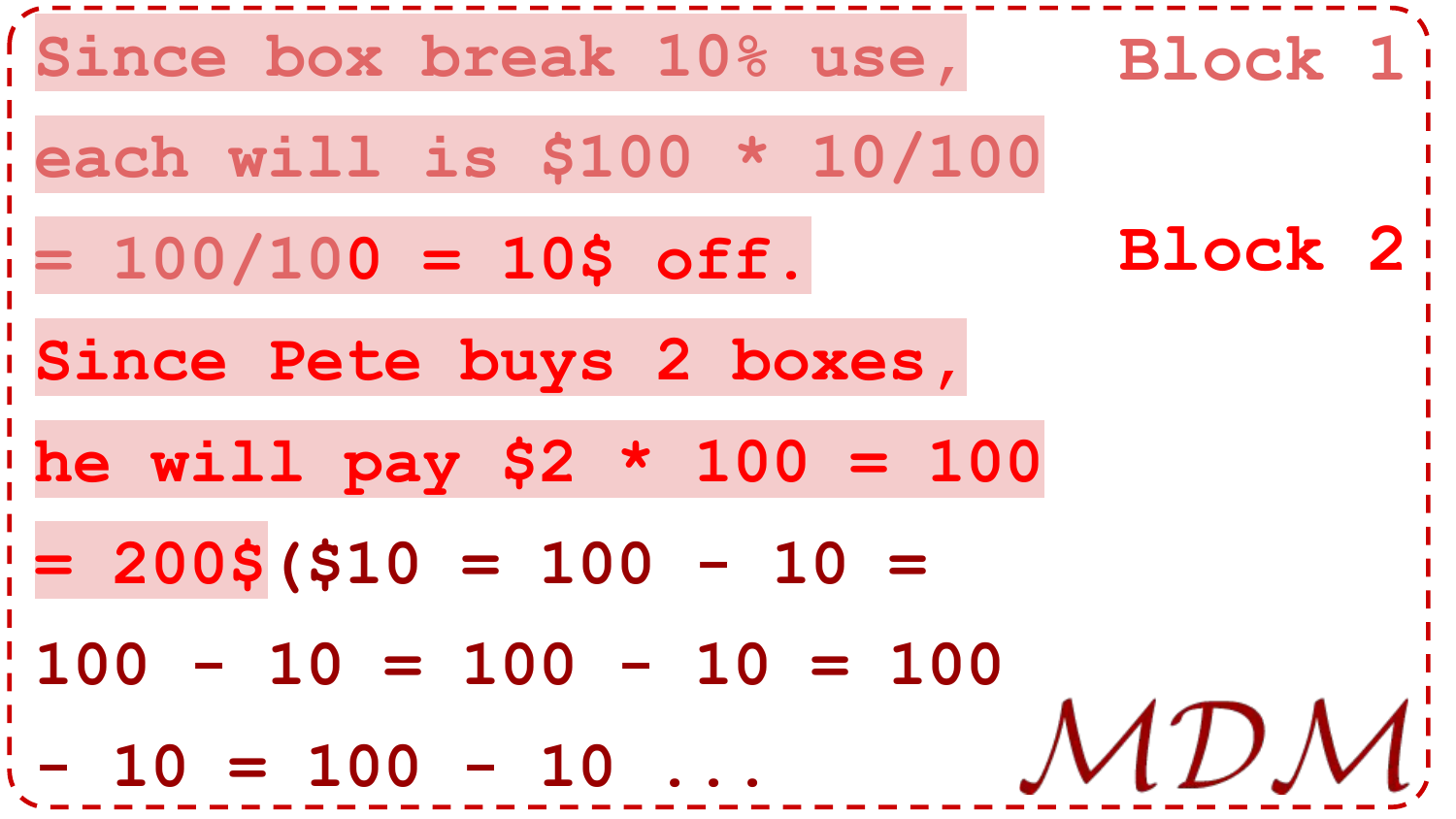}
    \end{minipage}
    \hspace{1em}
    \begin{minipage}[c]{0.32\linewidth}
    \includegraphics[width=1.1\linewidth]{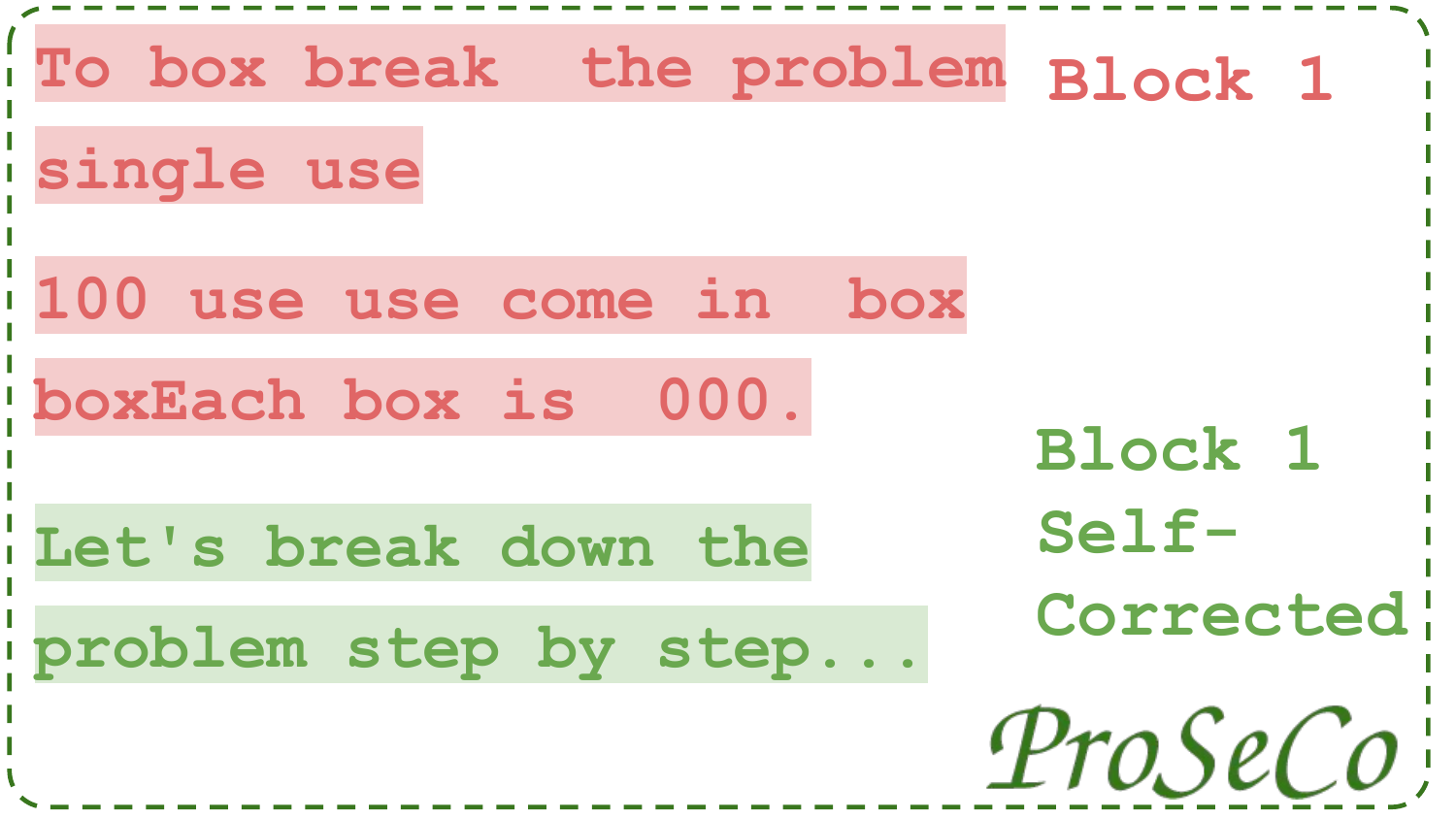}
    \end{minipage}
    \hspace{1em}
    \begin{minipage}[c]{0.32\linewidth}
    \includegraphics[width=1.1\linewidth]{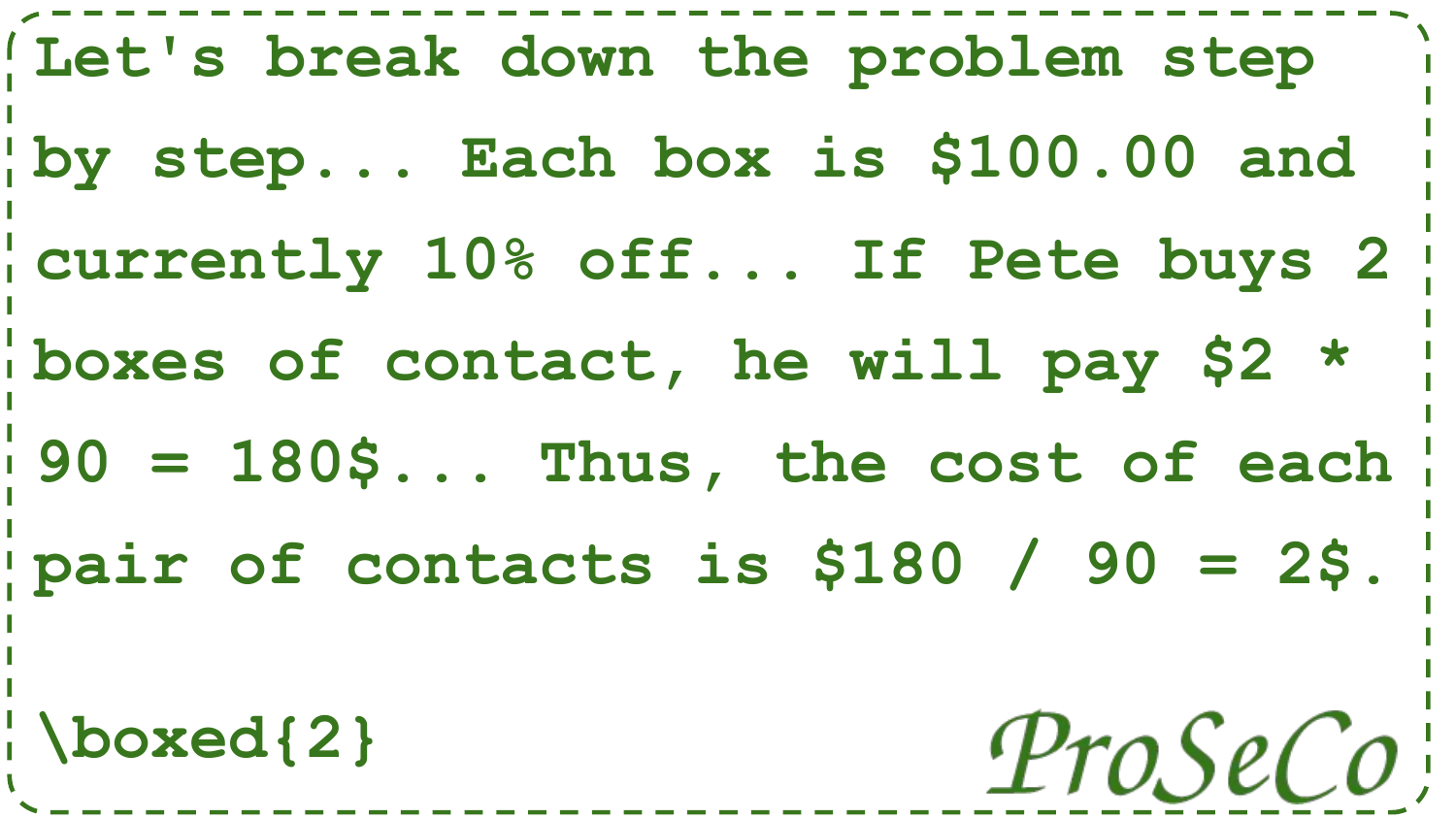}
    \end{minipage}
    \caption{Illustrative example demonstrating the benefits of self-correction.
    \textit{(Left)} For standard MDM, during parallel unmasking, errors occur.
    Mistakes accumulate and lead to sample collapse.
    \textit{(Middle)} For \method, a short correction loop steers generation `back on track.'
    \textit{(Right)} \method~can thus produce a final high quality output even with parallel generation.
    }
    \label{fig:self_correction}
\vspace{-1em}
\end{figure}

\subsection{Self-Correcting Objective}\label{subsec:methods_corrector_loss}
The high level idea of our approach is to train a corrector to minimize error between outputs from the denoiser and clean data.
By tying weights of the corrector and denoiser networks, we turn this formulation into an augmented MDM variational objective with an auxiliary error correction loss.

More precisely, let $\pi: \Delta^\vocabsize \rightarrow \onehots$ generate samples from the MDM denoiser model.
Then we define the input to our corrector as $\ytL := (\pi \circ \x^1_\theta(\ztL), \ldots \pi \circ \x^L_\theta(\ztL)).$
We present the following objective for training a corrector alongside the standard MDM denoiser:
\begin{align}\label{eq:cmdm_objective}
    \mathcal{L}^{\mathpzc{CMDM}}_{\phi, \theta} := \E_{q_{\text{data}}}\int_0^1\E_{q_t} \sum_{\ell=1}^L\Bigg[
    \underbrace{\lambda_t \log \langle \xl_\phi(\ytL), \xl \rangle}_{\mathcal{L}^{\mathpzc{C}}} 
    + \underbrace{\frac{\atdot}{1-\at}\delta_{\zt^\ell, \mask} \log \langle \xl_\theta(\zt^{1:L}), \xl \rangle}_{\mathcal{L}^{\mathpzc{MDM}}} dt\Bigg],
\end{align}
where $\phi$ denote the parameters of a corrector model $\x^\ell_\phi(\yL_t) \in \Delta^\vocabsize$.
The standard MDM loss in the second term ensures that we train a useful denoiser $\x_\theta$ that generates meaningful candidates for $\x_\phi$ to correct.
In addition to the MDM objective, we add a correction loss $\mathcal{L}^{\mathpzc{C}}$ to every term in the integrand.
This auxiliary loss amounts to a simple cross-entropy (CE) term, which encourages the model to identify and correct mistakes from the original denoising network.
The weighting $\lambda_t := \lambda(t) < 0$ lets us control the relative weight between the correction and diffusion losses.

\textbf{Theoretical Motivation}~
The corrector loss term in (\ref{eq:cmdm_objective}) can also be derived in a principled manner via the lens of learned predictor-corrector samplers for discrete diffusion \citep{lezama2022discrete, zhao2024informed}.
In this section, we provide a high-level overview of this argument, deferring the full derivation to Appendix \ref{appsec:predictor_corrector}.

Deviations during sampling result from a mismatch in unconditional marginals: $p_\theta(\zt) \neq  q(\zt)$.
To mitigate this, we apply the following Monte Carlo Markov Chain (MCMC), with learned model $p_\phi$:
\begin{align}\label{eq:mcmc_main}
     \ytL \sim p_\theta(\ytL \mid \ztL) ~~~\longrightarrow~~~
     \hxL \sim p_\phi(\hxL \mid \ytL) ~~~\longrightarrow~~~
     \hztL \sim q(\hztL \mid \hxL),
\end{align}
with the goal that samples from the chain are ultimately drawn from the correct marginals $q(\zt)$.

The following condition ensures that the limiting distribution of transition kernels defined by the sampling chain in (\ref{eq:mcmc_main}) indeed accomplish this goal (see proof in Appendix \ref{appsec:predictor_corrector}):
\begin{align}\label{eq:mcmc_condition_main}
    p_\phi(\xL \mid \ytL) \propto p_\theta(\ytL \mid \ztL) q_{\text{data}}(\xL).
\end{align}
Finally, optimizing $\phi$ so that the proportionality in (\ref{eq:mcmc_condition_main}) holds yields a loss term equivalent to $\mathcal{L}^{\mathpzc{C}}$.

\subsection{\texorpdfstring{\method}{ProSeCo}~Design Decisions}\label{subsec:prosec_design}
\textbf{Tying Corrector and Denoiser Weights}~
In order to have a unified model that can both decode and correct, we elect to tie the weights $\phi = \theta$.
This decision has the added benefit of eliminating memory overhead for training a separate corrector model.
The error correction term in (\ref{eq:cmdm_objective}) thus becomes a self-correcting one: $\mathcal{L}^{\mathpzc{C}} \rightarrow \mathcal{L}^{\mathpzc{SC}}$
\begin{wrapfigure}{r}{0.48\textwidth}
\vspace{-1.5em}
\begin{minipage}{0.48\textwidth}
\begin{algorithm}[H]
    \footnotesize
    \caption{\method~Training}
    \label{alg:train}
    \begin{algorithmic}[1]
    \STATEX \textcolor{ourgray}{\texttt{\# Differences from standard MDM training highlighted in \textbf{\textcolor{brown}{brown}}.}}
    \STATE \textbf{Input:} Training data $\mathcal{D}$, model $\x_\theta$ with parameters $\theta$, corruption process $q$, noise schedule $\alpha_t$.
    \REPEAT
        \STATE Sample $\x^{1:L}$ i.i.d. from $\mathcal{D}$
        \STATE Sample $t \sim \mathcal{U}[0, 1]$
        \STATE Compute $\at, \atdot$
        \STATE Sample $\zt^{1:L} \sim q(\z_{t}^{1:L} \mid \x)$
        \STATE Compute $\x_\theta(\zt^{1:L})$
        \STATE $\mathcal{L}_{\theta}^{\mathpzc{MDM}} \leftarrow \frac{\atdot}{1-\at}\sum_{\ell=1}^L\delta_{\zt^\ell, \mask} \log \langle \xl_\theta(\zt^{1:L}), \xl \rangle$
        \STATE \textcolor{brown}{$\ytl \leftarrow \sg(\texttt{one\_hot}(\argmax_i\x_\theta^\ell(\zt)_i)), \forall \ell$}
        \STATE \textcolor{brown}{Compute $\x_\theta(\yL_t)$}
        \STATE \textcolor{brown}{$\mathcal{L}^{\mathpzc{SC}}_{\theta} \leftarrow\frac{\atdot}{1-\at}\sum_{\ell=1}^L\log \langle \xl_\theta(\yL_t), \xl \rangle$}
        \STATE Perform gradient descent step on $\mathcal{L}_{\theta}^{\mathpzc{MDM}}$ \textcolor{brown}{$+ \mathcal{L}_{\theta}^{\mathpzc{SC}}$}
    \UNTIL{converged}
    \STATE \textbf{Return} $\theta$
    \end{algorithmic}
\end{algorithm}
\end{minipage}
\vspace{-1em}
\end{wrapfigure}

\vspace{-1em}
\textbf{Selecting Transformation $\pi$}~
To simplify optimization and ensure a deterministic mapping from $\ztL$ to $\yL_t$, we use $\argmax$ sampling from $\x_\theta(\ztL)$.
This decision aligns with how state-of-the-art MDMs, such as LLaDA \citep{nie2025large}, are used in practice.
Namely, during generation, at every iteration, each masked position from the output of $\x_\theta$ is decoded greedily.
Then using some algorithm or heuristic, we determine which positions to retain for the next round and which to keep masked.

\textbf{Setting Corrector Loss Weight $\lambda_t$}~
We found empirically that reusing the same factor from MDM was a performant strategy, i.e., $\lambda_t = \atdot / (1-\at)$.
In addition to balancing both terms in our objective, this weighting scheme is justified intuitively.
Specifically, in MDM, the weight $\atdot / (1-\at)$ discounts samples that are more heavily noised.
For training the corrector model, it is reasonable to apply a similar rationale: sequences with heavy masking which are harder to denoise will also be harder to correct and should therefore be down-weighted.

Combining the above yields our objective for effective joint denoising and self-correction training:
\begin{align}\label{eq:scmdm_objective}
    \mathcal{L}^{\mathpzc{SCMDM}}_{\theta} :=    \E_{q_{\text{data}}}\int_0^1\E_{q_t} \frac{\atdot}{1-\at}\sum_{\ell=1}^L\Bigg[
    \underbrace{
    \log \langle \xl_\theta(\ytL), \xl \rangle}_{\mathcal{L}^{\mathpzc{SC}}} + \underbrace{\delta_{\zt^\ell, \mask} \log \langle \xl_\theta(\zt^{1:L}), \xl \rangle}_{\mathcal{L}^{\mathpzc{MDM}}} dt\Bigg].
\end{align}

\subsubsection{Training with the \texorpdfstring{$\mathpzc{SCMDM}$}{SCMDM}~Objective}
In Algorithm \ref{alg:train}, we present training for \method, which requires only minor modification to standard MDM training.
Specifically, in addition to the MDM forward pass used for (\ref{eq:mdlm_nelbo}), we sample from $\x_\theta$, perform a second forward pass, and incorporate the corrector loss.
A final modification to (\ref{eq:scmdm_objective}) is
to wrap the denoiser model's outputs in the stop-gradient operation $\sg(\cdot)$ prior to forming the corrector input, which ensures training stability.

\section{Sampling with \texorpdfstring{\method}{ProSeCo}}\label{sec:methods_sampling}
In Algorithm \ref{alg:sample}, we present sampling for \method.
Having a model that can jointly decode and correct allows us to interleave unmasking and correction steps.
The goal of corrector iterations is two-fold: they should 1) potentially update already decoded positions in $\ztL$ and 2) provide improved predictions to be used for sampling in the unmasking posterior.
To control the computation budget allocated to self-correction, we allow users to specify hyperparameters that determine how often a self-correcting loop is executed, $\omega$, and the number of steps per loop, $S$.
\begin{wrapfigure}{l}{0.51\textwidth}
\vspace{-1.7em}
\begin{minipage}[t]{0.51\textwidth}
\begin{algorithm}[H]
\footnotesize
\caption{\method~Sampling}
\label{alg:sample}
\begin{algorithmic}[1]
\STATEX \textcolor{ourgray}{\texttt{\# Differences from standard MDM sampling highlighted in \textbf{\textcolor{brown}{brown}}.}}
\STATE \textbf{Input:} Model $\x_\theta$, length $L$, unmasking steps $T$, schedule $\alpha_t$, \textcolor{brown}{self-correction loop budget $S$, corrector frequency $\omega$.}
\STATE Initialize $\z_{t(T)}^{1:L} \leftarrow \mask^{1:L}$
\FOR{$i=T$ {\bfseries to} $1$}
    \STATE $\texttt{logits} \leftarrow \x_\theta(\ztL)$
    \IFBROWN{$(T-i+1) \mod \omega == 0$}        
        \STATE \textcolor{brown}{$\ztL, \texttt{logits} \leftarrow \texttt{corrector}(\x_\theta, S, \ztL, \texttt{logits})$}
        \STATEX \textcolor{ourgray}{\texttt{// See function below}}
    \ENDIFBROWN
    \STATE $\z_{t(i-1)}^{1:L} \leftarrow \texttt{sample\_posterior}(\texttt{logits},~ \ztL, \alpha_{t(i)})$
\ENDFOR
\STATE \textbf{Return} $\texttt{sample}(\x_\theta(\z_{t(0)}^{1:L}))$
\vspace{0.1em}
\FUNCTIONBROWN{\texttt{corrector}$(\x_\theta, S, \ztL, \texttt{logits})$}
    \STATE \textcolor{brown}{\textbf{Input:} Model $\x_\theta$, self-correction budget (per step) $S$, latent sequence $\ztL$, denoising output \texttt{logits}}
    \STATE\textcolor{brown}{Initialize $\ytl \leftarrow \texttt{one\_hot}(\argmax_i\texttt{logits}_i), \forall \ell$}
    \FORBROWN{$S$ steps}
        \STATE \textcolor{brown}{$\texttt{corrector\_logits} \leftarrow \x_\theta(\yL_t)$}
        \STATE \textcolor{brown}{$\yL_t \leftarrow \texttt{sample}(\texttt{corrector\_logits})$}
    \ENDFORBROWN
    \STATEX \textcolor{ourgray}{\texttt{\# Correct unmasked positions in $\ztL$}}
    \STATE \textcolor{brown}{$\ztl \leftarrow \yl_t, \forall \ztl \neq \mask$}
    \STATE \textcolor{brown}{\textbf{Return} $\ztL$, ~\texttt{corrector\_logits}}
\ENDFUNCTIONBROWN

\end{algorithmic}
\end{algorithm}
\end{minipage}
\vspace{-1em}
\end{wrapfigure}

\vspace{-1em}
In the \textcolor{brown}{\texttt{corrector}} routine invoked during sampling, we present the self-correction procedure.
When entering this loop, we convert $\x_\theta(\ztL)$ into a corrector input via $\argmax$ sampling at every position.
Within each correcting iteration, we use a \texttt{sample} method, e.g., greedy-max decoding, to generate the next corrector input sequence from the corrector model outputs.
After the inner loop completes, unmasked positions $\zt^\ell \neq \mask$ are replaced with corresponding positions in the corrector sequence, which represents the remediation of already decoded tokens.

When operating as an `unmasking' model, an inference loop iteration is identical to that in standard MDMs: we input a partially masked sequence $\ztL$, we output predictions $\xl_\theta(\ztL)$ at every position $\ell$, and we decide some tokens to unmask via a call to a \texttt{sample\_posterior} protocol, which returns a sequence with fewer masked tokens.
For example, we can use \textit{ancestral sampling} \citep{sahoo2024diffusion}, where we replace $\x$ with $\x_\theta$ in (\ref{eq:mdlm_true_posterior}) and sample accordingly, or the \textit{confidence-based} approach proposed in \citet{nie2025large}, where proposal tokens are selected greedily from $\x_\theta(\ztL)$ and those with top-$k$ confidence are retained for the next iteration.
If correction has been applied, instead of $\x_\theta(\ztL)$, we use the corrector model's last output for the \texttt{sample\_posterior} routine, as the corrector model logits represent better informed predictions.

\section{Experiments}\label{sec:exp}
We evaluate \method~across a diverse set of tasks and for both conditional and unconditional generation.
We consistently demonstrate that our model better scales the quality-efficiency frontier and enables even further improvements via inference-compute scaling.
\begin{table*}[t!]
    \centering
    \footnotesize
    \caption{Pass@1 on code and math benchmarks. 
    `Corrector Sampling' \xmark~/~\checkmark~symbol indicates whether a correction algorithm was applied. 
    $^\dagger$ indicates values taken from \citet{huang2025don}.
    Other "Prior work" values obtained via evaluation with open-sourced weights.
    Best value is \textbf{bolded}.
    }
    \begin{tabular}{llcccccc}
    \toprule
    & & & \multicolumn{2}{c}{\textbf{Code}} & \multicolumn{2}{c}{\textbf{Math}} \\
    \cmidrule(lr){4-5} \cmidrule(lr){6-7}
     & & \makecell{Corrector Sampling} & \makecell{HumanEval} & \makecell{MBPP} & \makecell{GSM8K} & \makecell{MATH}  \\
    \midrule
    \rowcolor{lightgray!25}
    \multicolumn{7}{l}{\textit{Off-the-Shelf 8B Models}} \\
    \multirow{6}{*}{\textit{Prior Work}}
    & Llama3.1-Instruct \citep{grattafiori2024llama} & \xmark & 63.41 & 70.90 & 81.05 & 47.38 \\    
    & LLaDA-Base \citep{nie2025large} & \xmark & 34.15 & 37.20 & 46.78 & 17.04 \\
    & LLaDA-Instruct \citep{nie2025large} & \xmark & 45.73 & 47.09 & 83.40 & 43.14 \\
    & ~~~~+ ReMDM \citep{wang2025remasking} & \checkmark & 43.90 & 45.50 & 83.93 & 43.76 \\
    & LLaDA1.5 \citep{zhu2025llada} & \xmark & 45.12 & 46.83 & 84.00 & 42.54 \\
    & ReMeDi-Instruct$^\dagger$ \citep{huang2025don} & \checkmark & 71.30 & 57.80 & 86.30 & 51.40 \\
    \rowcolor{lightgray!25}
    \multicolumn{7}{l}{\textit{Our SFT with LLaDA-Base 8B Model}} \\
    \multirow{2}{*}{\makecell{\textit{Baseline}}}
    & Vanilla SFT & \xmark & 58.54 & 56.88 & 88.86 & 46.60 \\
    & ~~~~+ ReMDM \citep{wang2025remasking} & \checkmark & 56.10 & 50.00 & 88.48 & 46.22 \\
    \midrule
    \multirow{2}{*}{\textit{Ours}} & \method~SFT & \xmark & 69.51 & 57.41 & 91.36 & 51.98 \\
    & ~~~~+ \method~Max Sampling & \checkmark & \textbf{72.56} & \textbf{69.31} & \textbf{92.19} & \textbf{55.06} \\
    \bottomrule
    \end{tabular}
    \label{tab:performance_results}
\end{table*}

\subsection{Math \& Code Benchmarks}\label{subsec:exp_llada}
\textbf{Setup}~
To evaluate \method~on large MDMs, we perform supervised fine-tuning (SFT) of the LLaDA-Base 8B model \citep{nie2025large} using our training Algorithm \ref{alg:train}.
Specifically, we fine-tune for $\sim$400B tokens on a modified version of the Llama-Nemotron-Post-Training dataset \cite{bercovich2026extending}.
In this variant, the target outputs for each prompt were generated using GPT-OSS \citep{openai2025gptoss120bgptoss20bmodel}
(see Appendix \ref{appsubsec:exp_details_benchmarks}).

We then evaluate on downstream benchmarks for code: HumanEval \citep{chen2021evaluating} and MBPP \citep{austin2021program}, and math: GSM8K \citep{cobbe2021training} and MATH \citep{hendrycks2021measuring}.
In addition to reporting metrics for open-sourced large AR and MDM models (with and without corrector mechanisms), for a direct comparison to our approach, we apply our same SFT recipe using the standard MDM objective to LLaDA-Base.

Note that for all LLaDA-based models we apply the semi-AR sampling algorithm \citep{arriola2025block} adopted by \citet{nie2025large}, where the full generation sequence $L$ is broken into blocks of size $B$ and unmasking decoding is applied block-by-block from left-to-right (see Appendix \ref{appsec:sampling_semiar}).

\textbf{Main Results}~
Table \ref{tab:performance_results} represents our main results.
The key finding is that for every benchmark, \method~outperforms all diffusion baselines, including those coupled with other corrector mechanisms, and \method~beats a comparably-sized instruction fine-tuned AR model (Llama3.1; \cite{grattafiori2024llama}) on 3 out of 4 tasks.
Moreover, our baseline SFT model (fourth row from the bottom) represents a strong watermark, significantly improving over the LLaDA-Base and even surpassing the LLaDA instruction fine-tuned model.
Nevertheless, SFT using \method~outperforms this strong baseline.
Notably, even before applying the \method~sampling procedure from Algorithm \ref{alg:sample}, the model trained with the \method~objective outperforms one trained with the standard MDM loss.

\textbf{Analyzing the Quality-Efficiency Trade-off}~
In Figure \ref{fig:llada_results_quality_efficiency}, we present an analysis of the quality-efficiency trade-offs for \method.
Results further to the north-west corner are desirable as they indicate better performance with a smaller number of function evaluations (NFEs).

In standard MDM, the only lever for controlling this trade-off is number of inference steps used for unmasking, or in other words the number of token positions generated in parallel at each decoding step.
For \method,  we can also control the compute via the frequency of corrector loops and number of iterative refinement steps per loop.
For each benchmark, we find that \method~can outperform the highest accuracy baseline configuration, i.e., generating one token in each iteration; depicted as the \textcolor{figgray}{gray dot}.
As depicted by the \textcolor{figgreen}{green star} marker labeled as ``\textcolor{figgreen}{Fast},''
we can strictly improve baseline accuracy with reduced NFEs by increasing the decoding parallelism (4 - 8 tokens) per unmasking step.
To compensate for this, we apply corrector loops at every 2nd decoding iteration and use up to 4 NFEs per corrector loop.
Replacing unmasking steps with correcting ones, leads to $\sim$2-4$\mathrm{x}$ speed-ups relative to LLaDA decoding, without sacrificing accuracy.
\method~also enables configurations that can moderately increase compute while delivering significant accuracy improvements, as depicted by the \textcolor{figorange}{orange star} marker labeled as ``\textcolor{figorange}{Balanced}'' (for best trade-off; see Appendix \ref{appsubsec:exp_details_benchmarks} for details on how this point was selected).
Finally, \method~supports even further scaling of test-time compute to attain our highest performing results depicted in the \textcolor{figblue}{blue star} markers labeled as ``\textcolor{figblue}{Max}.''

\textbf{Pareto Frontier for Parallel Decoding and Quality}~
Additionally, in Figure \ref{fig:llada_results_pareto}, we demonstrate that for standard MDMs, increasing the level of parallel decoding  significantly degrades sample quality.
In contrast, \method~models can recover from the mistakes introduced during generation and better scale the parallel decoding-quality Pareto frontier (see Appendix \ref{appsubsec:exp_results_tps} for throughput analysis).

\textbf{Ablation: Selecting Corrector Budget}~
Finally, in Figure \ref{fig:llada_results_budget} (Appendix \ref{appsubsec:ablation}), we explore the performance of various configurations of our sampling hyperparameters to provide guidance on allocating the corrector budget, as determined by frequency ($\omega$) of and number of steps ($S$) per loop.
The key takeaway, is that our model is highly robust to the choice of these hyperparameters, beating the baseline accuracy at each token parallelism level regardless of choice of $\omega$ and $S$.
Additionally for fast sampling regimes ($\text{tokens/step} \in \{4, 8\}$), we find that with more frequent corrector loops, we can overcome the drop in quality from parallel decoding and even match or beat the baseline performance achieved with $\text{tokens/step}=1$ with significant speed-up.

\textbf{Ablation: Standard MDM + Self-Correction}~
To further elucidate the importance of our proposed framework, we apply self-correction sampling from Algorithm \ref{alg:sample} to a vanilla SFT model.
In Table \ref{tab:ablation_correct_mdm} (Appendix \ref{appsubsec:ablation_correct_mdm}), we see that the standard MDM fails to correct errors, underscoring the core motivation of our work:
the baseline model never learns to change already unmasked tokens so its output at unmasked positions is generally uninformative and does not lead to error correction.

\begin{figure}
\centering
\includegraphics[width=\textwidth]{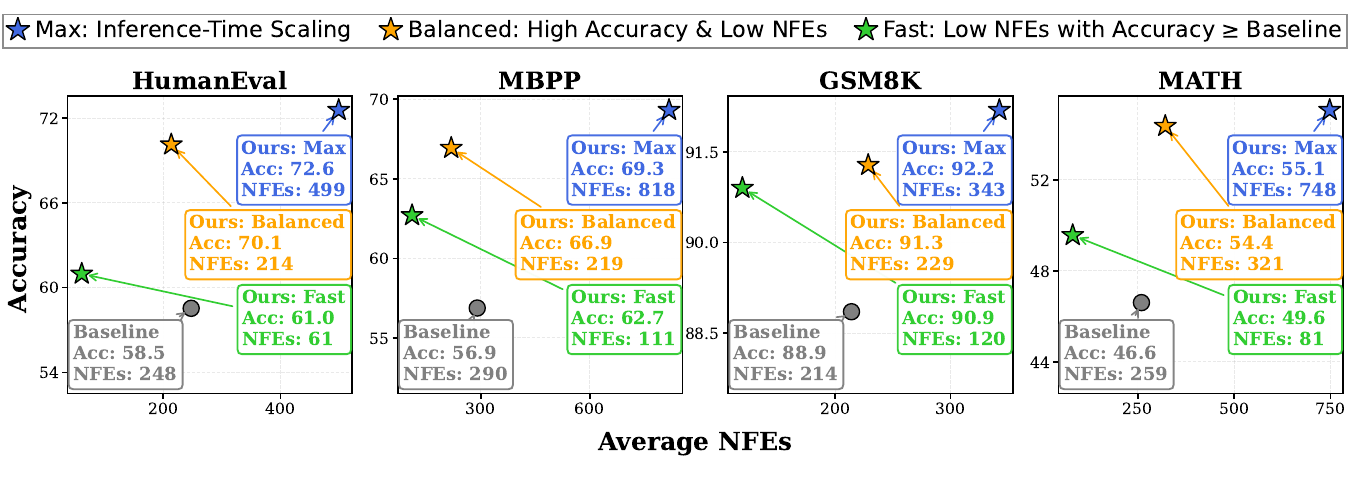}
\caption{Quality-efficiency trade-offs for \method.
Standard MDMs (Baseline; \textcolor{figgray}{gray dot}) attain best performance when decoding a single token every step.
\method~can vary number of corrector steps and attain comparable performance more efficiently (\textcolor{figgreen}{Ours: Fast; green star}), it achieve even better quality for modest increase in compute budget (\textcolor{figorange}{Ours: Balanced; orange star}), or it maximize quality by further scaling inference-time compute (\textcolor{figblue}{Ours: Max; blue star}).
}
\label{fig:llada_results_quality_efficiency}
\end{figure}

\begin{figure}
\centering
\includegraphics[width=\linewidth]{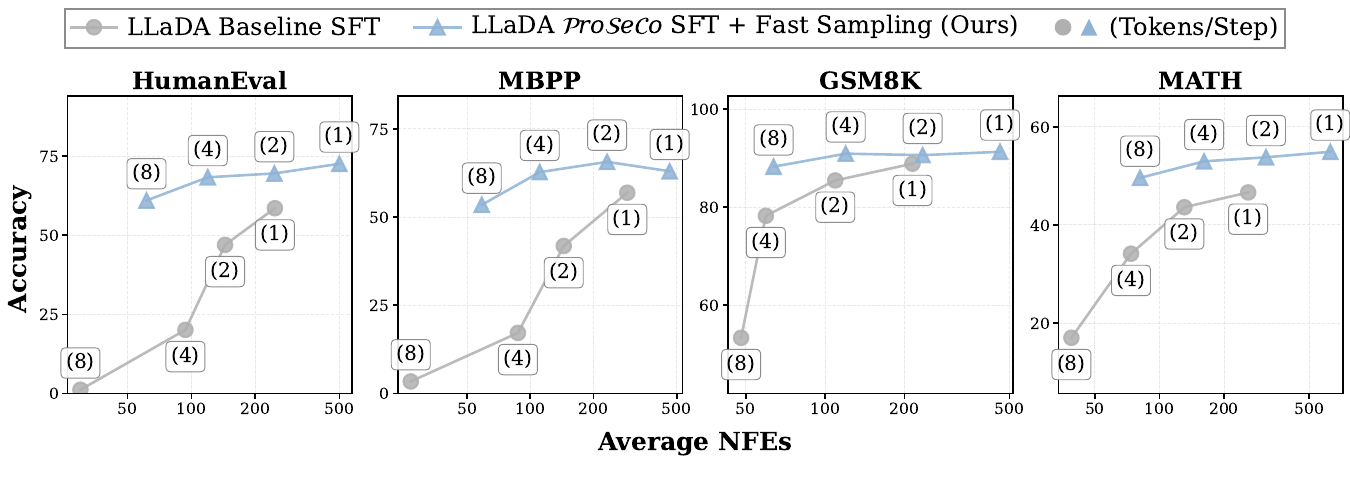}
\caption{Pareto frontier of parallel decoding and quality.
Applying a modest number of corrector steps, allows \method~models to recover from paralell decoding errors and extend this frontier.
}
\label{fig:llada_results_pareto}
\end{figure}

\subsection{Guided Molecule Design}\label{subsec:exp_qm9}
In the context of guided generation, often when guidance strength is increased, model samples collapse.
Our hypothesis is that \method~can recover from these errors, thereby improving the guided generation trade-off of maximizing a property of interest while producing a diverse set of samples.

\textbf{Setup}~
We train models on string representations of molecules known as SMILES \citep{weininger1988smiles} from the QM9 dataset \citep{ramakrishnan2014quantum, ruddigkeit2012enumeration} (see Appendix \ref{appsubsec:exp_details_guidance}).
We then apply a discrete classifier-free-guidance (CFG) algorithm \citep{schiff2024simple} with varying unmasking budgets $T$ and guidance strength $\gamma$.
We measure the number of generated sequences that are valid (can be parsed by RDKit library \citep{landrum2013rdkit}), unique, and novel (do not appear in the QM9 dataset) as the metric for diverse, high quality samples, and for the novel sequences, we compute the mean property value as the metric for property maximization.
We perform this experiment for two properties: ring count and drug-likeness (QED; \cite{bickerton2012quantifying}).
We compare \method~to an AR model, a diffusion model trained with uniform categorical noise (UDLM; \cite{schiff2024simple}), and a standard masked diffusion model (MDLM; \cite{sahoo2024simple}).
We also compare to the three remasking strategies ReMDM \citep{wang2025remasking}, ReMeDi \citep{huang2025don}, and GStar \citep{meshchaninov2025guided}.

\textbf{Results}~
In Figure \ref{fig:qm9}, we present the guidance results.
Points further north-east are preferable, as they represent property maximization without sacrificing sample diversity and quality.
For both properties of interest, \method~pushes the Pareto frontier in the desired direction.
This is particularly stark for experiments where we maximize the ring count property (left hand side of Figure \ref{fig:qm9}).

\begin{figure}[t]
    \centering
    \begin{minipage}{.3\linewidth}\centering
    \includegraphics[width=1.1\linewidth]{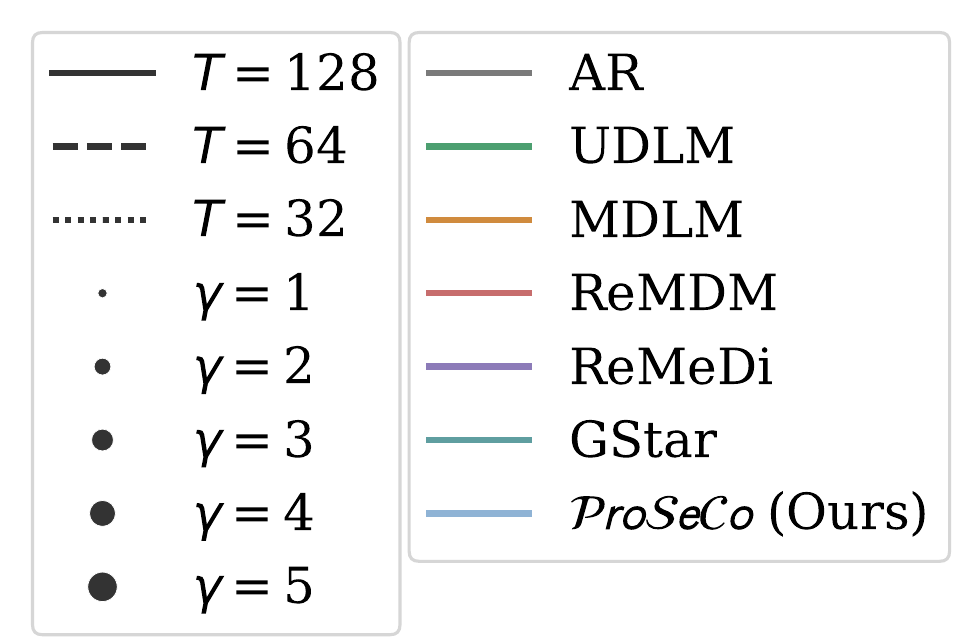}
    \end{minipage}
    \hfill
    \begin{minipage}{.33\linewidth}\centering
    \includegraphics[width=1.0\linewidth]{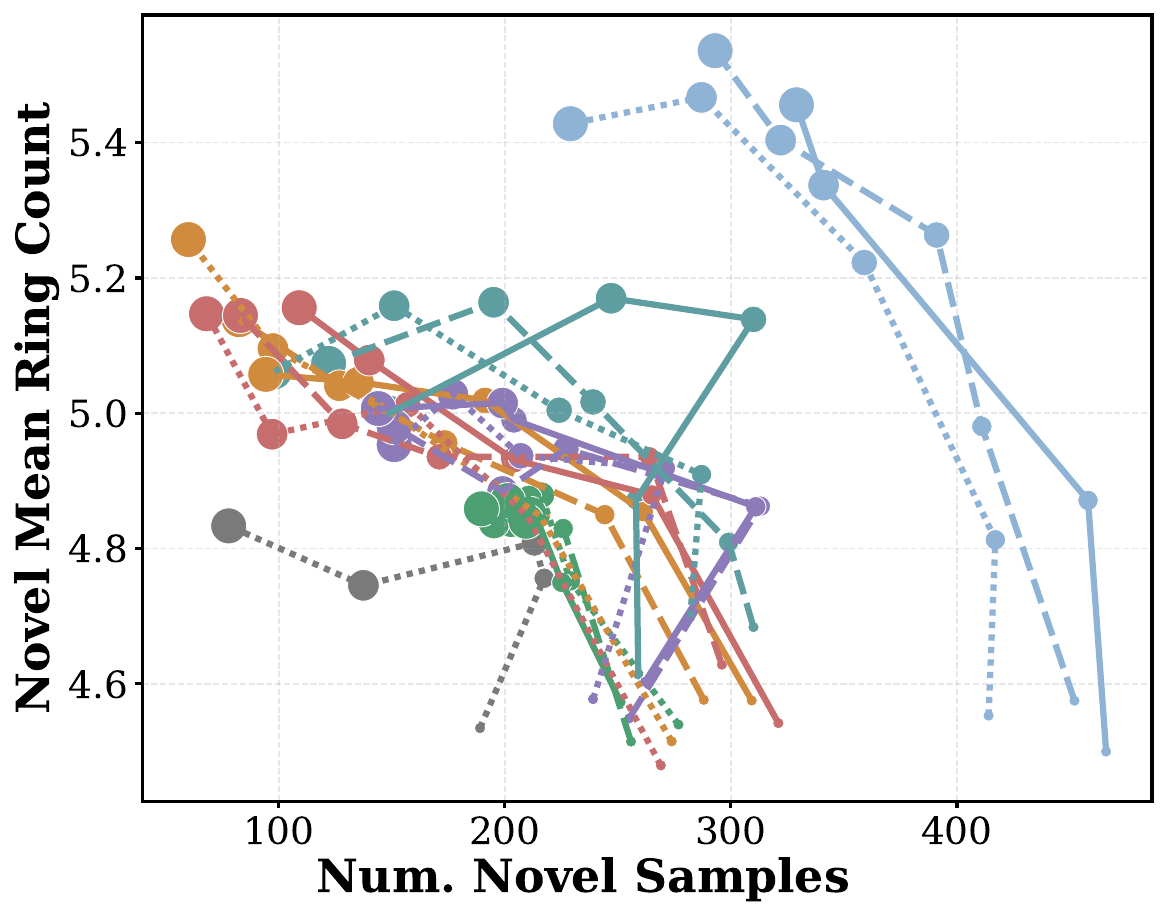}
    \end{minipage}
    \begin{minipage}{.33\linewidth}\centering
    \includegraphics[width=1.0\linewidth]{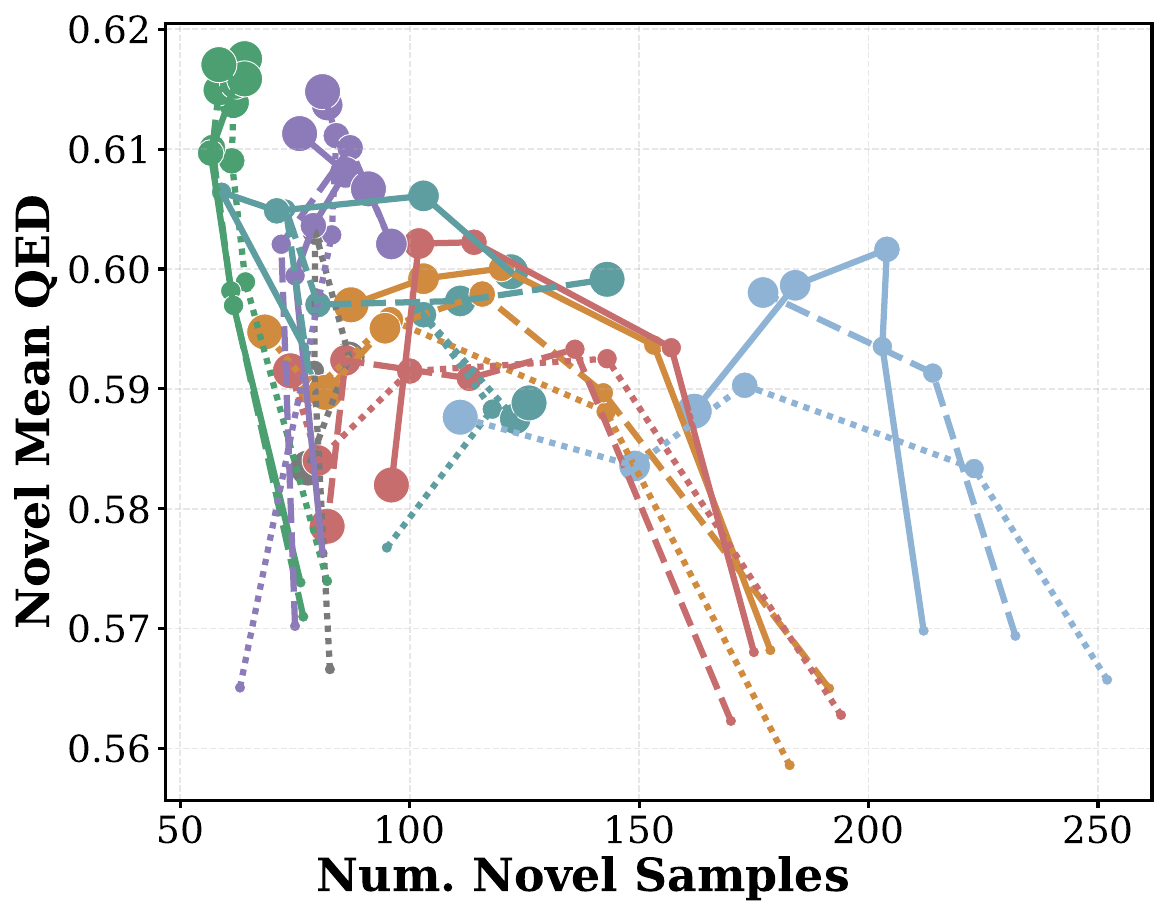}
    \end{minipage}
    \caption{\method~better navigates the novelty-property maximization Pareto frontier.
    Novel samples ($x$-axis) vs. mean property value of novel samples ($y$-axis) for classifier-free guidance \citep{schiff2024simple}, with varying unmasking steps $T$ (line style) and guidance strength $\gamma$ (marker size).
    \textit{(Left)} Maximizing the \textit{ring count} property.
    \textit{(Right)} Maximizing the \textit{drug likeness (QED)} property.
    }
    \label{fig:qm9}
\end{figure}

\begin{table*}[t]
    \centering
    \footnotesize
    \caption{
    Unconditional generation sample quality for models trained on OpenWebText.
    $^\dagger$ Values reported in \citet{kim2025fine}.
    $^\$$ Values reported in \citet{wang2025remasking}.
    $^*$ Values obtained with open-sourced weights; sequence length reduced to 512 to match the model trained in \citet{von2025generalized}.
    }
    \begin{tabular}{lrcccc|cccc|cccc}
    \toprule
    & & \multicolumn{4}{c}{MAUVE ($\uparrow$)} & \multicolumn{4}{c}{Gen. PPL ($\downarrow$)} & \multicolumn{4}{c}{Entropy ($\uparrow$)} \\
    \midrule
        \multicolumn{2}{l}{Data} & \multicolumn{4}{c|}{1.00} & \multicolumn{4}{c|}{14.8} & \multicolumn{4}{c}{5.44} \\
    \midrule
        \multicolumn{2}{l}{AR ($T=1024$)$^\$$} & \multicolumn{4}{c|}{0.76} & \multicolumn{4}{c|}{12.1} & \multicolumn{4}{c}{5.22} \\
    \midrule
    \rowcolor{lightgray!25}
    \multicolumn{2}{r}{$T=$} & $128$ & $256$ & $512$ & $1024$ & $128$ & $256$ & $512$ & $1024$ & $128$ & $256$ & $512$ & $1024$ \\
    \multicolumn{2}{l}{MDLM$^\$$ \citep{sahoo2024simple}}
    & 0.015 & 0.023 & 0.031 & 0.042 & 61.5 & 55.8 & 53.0 & 51.3 & 5.52 & 5.49 & 5.48 & 5.46 \\
    \multicolumn{2}{l}{ReMDM$^\$$ \citep{wang2025remasking}}
    & 0.057 & 0.216 & 0.350 & 0.403 & 42.5 & 30.5 & 21.1 & 28.6 & 5.43 & 5.34 & 5.21 & 5.38 \\
    \multicolumn{2}{l}{PRISM$^\dagger$ \citep{kim2025fine}} 
    & 0.118 & 0.294 & 0.423 & 0.527 & 21.5 & 18.0 & 16.4 & 15.3 & 5.18 & 5.15 & 5.12 & 5.10 \\
    \multicolumn{2}{l}{GIDD$^*$ \citep{von2025generalized}} & 0.268 & 0.284 & 0.334 & 0.356 & 95.1 & 80.5 & 76.9 & 76.1 & 5.00 & 4.94 & 4.94 & 4.94 \\
    \multicolumn{2}{l}{\method~\textit{(Ours)}} & 
    \textbf{0.295} & \textbf{0.557} & \textbf{0.597} & \textbf{0.604} & 23.1 & 16.5 & 13.2 & 10.9 & 5.45 & 5.39 & 5.29 & 5.22 \\
    \bottomrule
    \end{tabular}
    \label{tab:owt}
\end{table*}

\textbf{Ablation on Choice of $\lambda_t$}
We ablate the effect of our choice of $\lambda_t$ by repeating the guidance experiment for the ring count property with fixed $\lambda \in \{0.1, 1, 10\}$ and a time-varying weighted $ \lambda_t \in \{ 0.1\cdot \tfrac{\atdot}{(1-\at)}, 1\cdot \tfrac{\atdot}{(1-\at)}, 10\cdot \tfrac{\atdot}{(1-\at)}\}.$
We report results in Figure \ref{fig:lambda_ablation} (Appendix \ref{appsubsec:ablation_lambda}), where we find that our design choice of including $\tfrac{\atdot}{(1-\at)}$ in the corrector loss improves performance and that our model is robust to different scaling factors applied to this weighting scheme.

\subsection{Unconditional Text Generation}\label{subsec:exp_text}
\textbf{Setup}~
We train \method~from scratch on the OpenWebText (OWT; \cite{Gokaslan2019OpenWeb}) dataset for 1M steps \citep{sahoo2024simple} (see Appendix \ref{appsubsec:exp_details_text}).
We then generate 5000 samples consisting of $L=1024$ tokens, for varying sampling budgets $T$.
We compute MAUVE \citep{pillutla2021mauve} and report the perplexity under the GPT2-Large model \citep{radford2019language}, to also measure quality, and average sequence entropy, to reflect diversity \citep{zheng2024masked}.
We compare against an AR model and MDLM \citep{sahoo2024simple} without and with correctors: ReMDM \citep{wang2025remasking} and PRISM \citep{kim2025fine}.
We also compare to the hybrid mask and uniform noise model GIDD \citep{von2025generalized}.

For \method, we apply 3 corrector steps after each unmasking step.
For parity with the compute budget of other methods, we reduce the number of unmasking steps by a factor of 4 per sampling budget $T$.
For this experiment, we use slightly modified training and sampling (see Appendix \ref{appsubsec:exp_details_text}).

\textbf{Results}~
In Table \ref{tab:owt}, we see that across budgets, \method~either significantly outperforms or matches baseline methods.
Notably, even with just 256 steps, \method~attains comparable sample quality to using PRISM with 2$\mathrm{x}$ or ReMDM with 4$\mathrm{x}$ the inference budget.

\section{Related Works}\label{sec:related_works}
\textbf{Discrete Diffusion}~
The seminal work of D3PM \citep{austin2021structured} laid the foundation for adapting diffusion to discrete data.
Some works extended this paradigm via the formalism of continuous-time Markov chains \citep{campbell2022continuous, loudiscrete}.
However, our method is more in line with the continuous-time extensions of the variational inference perspective \citep{ou2024your, sahoo2024diffusion, shi2024simplified}.
Previous efforts in this vein have relied on categorical uniform noise corruptions \citep{schiff2024simple, von2025generalized} to alleviate the locked-in decoded tokens limitation of MDMs.
Recently LLaDA 2.1 \citep{bie2026llada2} scaled up the GIDD framework to 16B-100B parameters by fine-tuning with mixed mask and uniform noise.
In contrast, our work which uses the model’s own unmasking output to train for correction, more closely aligns training and inference distributions, since during generation mistakes will not resemble uniform noise, but rather will come from the model’s own outputs.
Our approach thus uses a more ‘informed’ noise relative to the mask-uniform hybrid approach.
\nocite{sohl2015deep,song2019generative,dhariwal2021agn,ho2020denoising,kingma2013auto,kuleshov2013fast,wang2023infodiff,li2022diffusion,dream2025,nie2025large,shi2024simplified,arriola2025block,wang2025d2,schiff2024simple}

\textbf{Self-conditioning \& Step Unrolling}~
In self-conditioning \citep{chen2022analog, dieleman2022continuous}, model predictions of the clean data from one step inform future outputs.
Also related to our work is training on unrolled predictions from the denoising trajectory \citep{savinov2021step}, where the model is trained on its own outputs of less noisy latent sequences to more closely simulate the distribution seen during generation.
Our method instead uses predictions of clean data, not unrolled trajectories of partially masked sequences.

\textbf{Corrector Methods}~
Several works used a predictor-corrector framework to improve sample quality, where, following an unmasking predictor step, a corrector step is applied to remask decoded positions \citep{campbell2022continuous, gat2024discrete, peng2025path, wang2025remasking}.
In contrast to these training free methods, other works propose to train an additional head to predict incorrect positions that should be remasked \citep{huang2025don, kim2025fine, lezama2022discrete, liu2026teach, meshchaninov2025guided}.
More related to our work is \citet{zhao2024informed}, which predicts corrections to already decoded tokens.
However this method relies on a distinct Hollow Transformer backbone \citep{sun2022score}.
This severely limits its application to fine-tuning of MDMs, e.g., LLaDA, pre-trained with the standard Transformer backbones.

\section{Discussion \& Conclusion}\label{sec:discussion}
In this work, we presented a framework for jointly training a diffusion model to unmask and self-correct.
We enable and take advantage of this new ability via minimal and straightforward modifications to standard MDM training and sampling algorithms.
Evaluating on conditional and unconditional generation, across various model sizes, we demonstrated that our method consistently outperforms vanilla MDMs and alternative corrector methods both in terms of speed-quality tradeoffs and in the ability to further scale inference-time compute for improved generation.

\textbf{Limitations}~
The key drawback of our work is the added computational cost of the second forward pass during training, especially in contrast to inference-time only schemes, e.g., \citet{wang2025remasking}.
However, the empirical results demonstrate that this trade-off of train-time compute is well worth the performance and efficiency gains achieved on downstream evaluations.

\textbf{Future Directions}~
In follow up work, we plan to explore the disentangling of the corrector and unmasking models via weight untying or with completely separate neural network backbones for each model.
Additionally, while we present a performant sampling algorithm, the ability to correct mistakes opens up the design space to more sophisticated schemes of jointly using corrector and unmasking steps, which we leave to future work to explore.

\section*{Acknowledgments}
This work was partially funded by the National Science Foundation under award CAREER 2145577, and by the National Institute of Health under award MIRA R35GM151243.
Marianne Arriola is supported by a NSF Graduate Research Fellowship under award DGE-2139899 and a Hopper-Dean/Bowers CIS Deans Excellence Fellowship.
\bibliographystyle{plainnat}
\bibliography{refs}

\clearpage
\allowdisplaybreaks
\appendix
\setcounter{tocdepth}{2}
\tableofcontents
\newpage

\section{Deriving \texorpdfstring{\method}{ProSeCo}~from MCMC Predictor-Corrector}\label{appsec:predictor_corrector}
We now show how we can derive the corrector loss term $\mathcal{L}^{\mathpzc{C}}$ from the objective in (\ref{eq:cmdm_objective}) by leveraging formulations from predictor-corrector sampling for continuous-time Markov chains \citep{campbell2022continuous}.
In particular, we start from the perspective that sampling errors are a byproduct of the fact that the posteriors of the learned reverse process do not align with those from predefined corruption process $p_\theta(\zs \mid \zt) \neq q(\zs \mid \zt)$.
This issue arises because of two reasons, 1) the model may not fully capture the exact marginals of each token position, and 2) the inherent parameterization of our reverse process as factorized independently across the sequence length dimension does not account for cross-token dependencies \cite{ben2025accelerated}.
These deviations compound along a generation path and ultimately lead to marginals that are significantly divergent $p_\theta(\zt) \neq q(\zt)$.
In score-matching generative modeling for continuous data \citep{song2020score}, this mismatch can be mitigated by Langevin Monte Carlo Markov chain (MCMC) iterations.
Unfortunately in the discrete data setting, we do not have access to a continuous score function that drives the predictor-corrector sampler proposed in \citet{song2020score}.

Crucially, \citet{campbell2022continuous} demonstrate that for discrete data we can also apply corrector steps and produce a MCMC that yields $q(\zt)$ as its stationary distribution.
Interleaving corrector MCMC steps with predictor (unmasking) ones enables us to recover from deviations before proceeding along the generation trajectory.
While \citet{campbell2022continuous} observe better sample quality with their proposed `forward-backward' corrector scheme, follow up works \citep{lezama2022discrete, zhao2024improving} note that this methodology can be further improved by learning the reverse transition kernel for the corrector MCMC, leading to more `informed' corrector steps.

At a high level, our derivation below proceeds as follows.
We first propose a corrector sampling chain.
We then prove that if the corrector distribution satisfies a given functional form then we will achieve our goal of having the samples from the MCMC corrector step $\sim q(\ztL)$.
Finally, we enforce this condition by minimizing a divergence between the corrector distribution induced by our model and the desired form.
This training recovers the corrector loss proposed in Section \ref{sec:methods}.

\textbf{Corrector MCMC}~
In the vein of learned predictor-corrector works \citep{lezama2022discrete}, we define an MCMC corrector step by the following chain:
\begin{align}\label{eq:mcmc}
    \begin{split}
        \text{1) Sample } \ytL &\sim p_\theta(\ytL \mid \ztL), \\
        \text{2) Sample } \hxL &\sim p_\phi(\hxL \mid \ytL), \\
        \text{3) Sample } \hztL &\sim q(\hztL \mid \hxL),
    \end{split}
\end{align}
where $p_\theta(\ytL \mid \ztL) = \prod_{\ell=1}^L\cat(\ytl; \xl_\theta(\ztL))$ is the distribution defined by parallel decoding from the output of our denoising network $\x_\theta$ and $q(\hztL \mid \hxL)$ follows the factorized masked diffusion forward process, introduced in Section \ref{sec:background}.
Additionally, $p_\phi$ represents a corrector model with parameters $\phi$.

Intuitively the sampling chain above represents the following correction mechanism: 1) we sample from the denoising model's prediction given the current latent variable, $\ytL \sim p_\theta(\ytL \mid \ztL)$, then 2) given this sample, the corrector predicts the clean data distirbution, and we sample from this prediction, $\hxL \sim p_\phi(\hxL \mid \ytL)$, and finally 3) we re-noise the sample from the corrector output according to the predefined noising process, $\hztL \sim q(\hztL \mid \hxL)$.
The goal of this chain is that the re-noised latent variable $\hztL$ more closely resembles draws from the true distribution $q(\ztL)$.

\textbf{Detailed Balance Condition}~
We now describe a condition on $p_\phi$ that will achieve this goal.
\begin{proposition}
The following form will ensure the detailed balanced condition, with $q(\ztL)$ as the stationary distribution:
\begin{align}\label{eq:corrector_condition}
p_\phi(\xL \mid \ytL) q(\ztL \mid \xL) &= \frac{p_\theta(\ytL \mid \ztL) q(\ztL \mid \xL) q_{\text{data}}(\xL)}{Z(\ytL)},
\end{align}
where $Z(\ytL)$ is a normalization constant representing the marginal probability of $\ytL$.
\end{proposition}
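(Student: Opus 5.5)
We write the transition kernel of the corrector chain in (\ref{eq:mcmc}) explicitly, with $\ytL$ and $\xL$ as auxiliary variables. Then we verify detailed balance by showing that the joint probability of a move from $\ztL$ to $\hztL$ under $q(\ztL)$ is symmetric in $(\ztL, \hztL)$. Concretely, the kernel is
\begin{align*}
K(\hztL \mid \ztL) = \sum_{\ytL}\sum_{\xL} p_\theta(\ytL \mid \ztL)\, p_\phi(\xL \mid \ytL)\, q(\hztL \mid \xL),
\end{align*}
and we need $q(\ztL) K(\hztL \mid \ztL) = q(\hztL) K(\ztL \mid \hztL)$. Detailed balance then implies stationarity of $q(\ztL)$ by summing over $\ztL$.

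The key step substitutes the condition (\ref{eq:corrector_condition}) into the summand. We read the condition as fixing $p_\phi(\xL \mid \ytL)$ to be the ``posterior'' $\propto q_{\text{data}}(\xL) \sum_{\z} q(\z \mid \xL) p_\theta(\ytL \mid \z)$, i.e., the Bayes inverse of the composite channel $\xL \to \ztL \to \ytL$. Here $Z(\ytL) = \sum_{\z}q(\z)\,p_\theta(\ytL\mid\z)$ is the marginal of $\ytL$ under the forward process followed by the denoiser. We use the identity $q(\ztL) p_\theta(\ytL\mid\ztL) = \sum_{\xL} q_{\text{data}}(\xL) q(\ztL\mid\xL) p_\theta(\ytL\mid\ztL)$. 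With it, the product $q(\ztL)\, p_\theta(\ytL\mid\ztL)\, p_\phi(\xL\mid\ytL)$ becomes the joint distribution of $(\ztL,\ytL)$ times the posterior of $\xL$ given $\ytL$. The whole forward path then reads
\begin{align*}
q(\ztL) K(\hztL\mid\ztL) = \sum_{\ytL}\frac{1}{Z(\ytL)}\Big[q(\ztL)p_\theta(\ytL\mid\ztL)\Big]\Big[\sum_{\xL} q_{\text{data}}(\xL)\, p^{\ytL}(\xL)\, q(\hztL\mid\xL)\Big],
\end{align*}
where $p^{\ytL}(\xL)$ collects the remaining $\xL$-dependent factors. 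Writing this in the symmetric form $\sum_{\ytL} A(\ztL,\ytL)A(\hztL,\ytL)/Z(\ytL)$, with $A(\z,\ytL) := q(\z)p_\theta(\ytL\mid\z)$, makes the swap $\ztL \leftrightarrow \hztL$ manifest. This is the standard argument that a Gibbs-like ``encode then decode via exact posterior'' chain is reversible, as in \citet{lezama2022discrete}.

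The main obstacle is that the chain's last step re-noises through $q(\hztL\mid\hxL)$ rather than through the channel $p_\theta(\cdot\mid \z)$. Symmetry therefore needs the posterior over $\xL$ given $\ytL$ to also serve as the posterior over $\hztL$ given $\ytL$. I would handle this by treating (\ref{eq:corrector_condition}) as stating the joint $p_\phi(\xL\mid\ytL)q(\ztL\mid\xL)$, that is, the posterior over the pair $(\xL,\ztL)$ given $\ytL$. Marginalizing $\xL$ out of this identity gives exactly $p(\hztL\mid\ytL) = A(\hztL,\ytL)/Z(\ytL)$, and the symmetry follows directly. I would first check this marginalization and the normalization $\sum_{\xL,\ztL}(\cdot)=1$, which confirms that $Z(\ytL)$ is indeed the marginal of $\ytL$.
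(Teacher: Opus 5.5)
Your final argument matches the paper's proof. You read the condition literally as an identity in $(\xL,\ztL)$ for each $\ytL$, and marginalizing $\xL$ then gives $\sum_{\xL}p_\phi(\xL\mid\ytL)\,q(\hztL\mid\xL)=q(\hztL)\,p_\theta(\ytL\mid\hztL)/Z(\ytL)$, so $q(\ztL)K(\hztL\mid\ztL)=\sum_{\ytL}A(\ztL,\ytL)A(\hztL,\ytL)/Z(\ytL)$ is symmetric; this is the paper's substitute, marginalize $\xL$, and swap $\ztL\leftrightarrow\hztL$ chain. The earlier ``Bayes inverse of the composite channel'' reading does not give this symmetry on its own, as you noticed, so drop that paragraph and the loosely defined $p^{\ytL}(\xL)$ display.
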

\begin{proof}
We prove this using a similar argument to that presented in \citet{lezama2022discrete}.
Specifically, let $R(\hztL \mid \ztL)$ denote the transition kernel defined by the sampling chain in (\ref{eq:mcmc}), then
\begin{align}
    &q(\ztL) R(\hztL \mid \ztL) 
    = q(\ztL) \sum_{\ytL}\sum_{\xL}p_\phi(\xL \mid \ytL)q(\hztL \mid \xL) p_\theta(\ytL \mid \ztL) \quad\textit{\footnotesize{\{From (\ref{eq:mcmc})}\}} \nonumber \\
    &= q(\ztL) \sum_{\ytL}\sum_{\xL}\left( \frac{p_\theta(\ytL\mid\hztL)q(\hztL \mid \xL)q_{\text{data}}(\xL)}{Z(\ytL)} \right) p_\theta(\ytL \mid \ztL) \quad\quad\quad\quad\textit{\footnotesize{\{From (\ref{eq:corrector_condition})\}}} \nonumber \\
    &= q(\ztL) \sum_{\ytL}\left(\frac{p_\theta(\ytL\mid\hztL)q(\hztL)}{Z(\ytL)} \right) p_\theta(\ytL \mid \ztL) \quad\quad\quad\quad\quad\quad\quad\quad\quad\quad\quad\textit{\footnotesize{\{Marginalize $\x$\}}} \nonumber \\
    &=q(\hztL) \sum_{\ytL}\left(\frac{p_\theta(\ytL \mid \ztL)q(\ztL)}{Z(\ytL)} \right)p_\theta(\ytL\mid\hztL) \quad\quad\quad\quad\quad\quad\quad\textit{\footnotesize{\{Note `role change' of $\zt$ \& $\hzt$\}}} \nonumber \\
    &=q(\hztL) \sum_{\ytL}\sum_{\xL}\left(\frac{p_\theta(\ytL \mid \ztL)q(\ztL \mid \xL)q_{\text{data}}(\xL)}{Z(\ytL)} \right)p_\theta(\ytL\mid\hztL) \nonumber \\
    &= q(\hztL) \sum_{\ytL}\sum_{\xL}p_\phi(\xL \mid \ytL)q(\ztL \mid \xL)p_\theta(\ytL\mid\hztL) \quad\quad\quad\quad\quad\quad\quad\quad\quad\quad\textit{\footnotesize{\{From (\ref{eq:corrector_condition})\}}} \nonumber \\
    &= q(\hztL) R(\ztL \mid \hztL). \quad\quad\quad\quad\quad\quad\quad\quad\quad\quad\quad\quad\quad\quad\quad\quad\quad\quad\quad\quad\quad\quad\quad\quad\quad\quad\textit{\footnotesize{\{From (\ref{eq:mcmc})\}}}
    \nonumber
\end{align}
\end{proof}

\textbf{Maximum Likelihood Corrector Training}~
Having shown that the condition in (\ref{eq:corrector_condition}) is sufficient for achieving the desired limiting distribution $q(\ztL)$ for the corrector MCMC, we can cast the corrector training as ensuring that $p_\phi(\xL \mid \ytL)q(\ztL \mid \xL) \propto p_\theta(\ytL \mid \ztL) q(\ztL \mid \xL) q_{\text{data}}(\xL)$ \citep{lezama2022discrete}.
We can achieve this by minimizing the following objective:
\begin{align}\label{eq:corrector_kl}
\begin{split}
    \KL[&p_\theta(\ytL \mid \ztL) q(\ztL \mid \xL) q_{\text{data}}(\xL) \vert\vert p_\phi(\xL \mid \ytL)q(\ztL \mid \xL)Z(\ytL)] \\
    &= -\E_{\xL \sim q_{\text{data}}(\xL)}\E_{\ztL \sim q(\ztL \mid \xL)}\E_{\ytL \sim p_\theta(\ytL \mid \ztL)}[\log p_\phi(\xL \mid \ytL)] + C,
\end{split}
\end{align}
where $C$ represents factors that do not depend on $\phi$.
Minimizing this objective ensures that the distribution over latents $\ztL$ induced by the corrector steps aligns with the true marginal distributions.

\textbf{Parameterizing the Corrector Model}~
Finally, similar to the denoising model, we parameterize the corrector to be independent across the sequence length dimension given its inputs, as follows:
\begin{align}\label{eq:corrector_parameterization}
    p_\phi(\xL \mid \ytL) = \prod_{\ell=1}^L p_\phi(\xl \mid \ytL), \quad \text{with }~ p_\phi(\xl \mid \ytL) = \cat(\xl; \xl_\phi(\ytL)),
\end{align}
where $\xl_\phi(\ztL) \in \Delta^\vocabsize$ for all $\ell \in \{1, \ldots, L\}$ is the output of a corrector model network.
This independent factorization for the corrector model enables us to maintain parallel generation.

Using this factorized parameterization and replacing the sampling of $\ytL$ with the transformation $\pi \circ \xL_\theta$ defined in Section \ref{subsec:methods_corrector_loss}  recovers the $\mathcal{L}^\mathpzc{C}$ corrector loss term from the objective in (\ref{eq:cmdm_objective}):
\begin{align*}
    \mathcal{L}^\mathpzc{C} = \log \langle \xl_\phi(\ytL), \xl \rangle.
\end{align*}

\section{Sampling with \texorpdfstring{\method}{ProSeCo}~Semi-Autoregressively}\label{appsec:sampling_semiar}
In Algorithm \ref{alg:sample_bd3lm}, we present a modified version of our sampling method from Algorithm \ref{alg:sample}, which accommodates the block autoregressive decoding proposed in \citet{arriola2025block} and adopted by LLaDA \citep{nie2025large}.
Given that we applied block AR decoding to the LLaDA models, the implementation provided in Algorithm \ref{alg:sample_bd3lm} assumes full bidirectional attention is applied across the entire sequence at every forward pass, as in LLaDA, and is not written to support key-value (KV) caching.
However, this algorithm can be easily adapted to support efficient KV caching proposed in \citet{arriola2025block}.

Notably, for \method~with semi-AR decoding, at every correction iteration, clean tokens in the current block and all previously decoded blocks can be adapted.

\begin{algorithm}[!ht]
    \small
    \caption{\method~Sampling Block Autoregressive}
    \label{alg:sample_bd3lm}
    \begin{algorithmic}[1]
    \STATEX \textcolor{ourgray}{\texttt{\# Assumes full bidirectional attention without KV-caching, as in LLaDA.}}
    \STATEX \textcolor{ourgray}{\texttt{\# Differences to standard MDM with block AR decoding highlighted in \textbf{\textcolor{brown}{brown}}.}}
    \STATE \textbf{Input:} Model $\x_\theta$, length $L$, block size $B$, unmasking steps $T$, noise schedule $\alpha_t$, \textcolor{brown}{self-correction budget (per step) $S$, corrector frequency $\omega$.}
    \STATE Initialize $\z_{t(T)}^{1:L} \leftarrow \mask^{1:L}$
    \FOR {$b=1$ {\bfseries} to $(L / B)$}
    \FOR{$i=T$ {\bfseries to} $1$}
        \STATE $\texttt{logits} \leftarrow \x_\theta(\ztL)$
        \IFBROWN{$(T-i+1) \mod \omega == 0$}
        \STATE \textcolor{brown}{$\zt^{1:L},~ \texttt{logits} \leftarrow \texttt{corrector}(\x_\theta, \zt^{1:L}, S)$}
        \ENDIFBROWN
        \STATE \texttt{logits}$^\ell$ $\leftarrow -\infty, \forall \ell \in [1, (b-1)\cdot B] \cup [b\cdot B +1, L]$
        \STATE $\z_{t(i-1)}^{1:L} \leftarrow \texttt{sample\_posterior}(\texttt{logits},~ \ztL, \alpha_{t(i)})$
    \ENDFOR
    \STATE $\z_{t(0)}^{\ell} \leftarrow \texttt{sample}(\x^\ell_\theta(\z_{t(0)}^{1:L})), \forall \ell \in [1+ (b-1)\cdot B, b\cdot B]$
    \STATE $\z_{t(T)}^{1:L} \leftarrow \z_{t(0)}^{1:L}$
    \ENDFOR
    \STATE \textbf{Return} $\z_{t(0)}^{1:L}$
    \end{algorithmic}
\end{algorithm}

\section{Additional Experimental Details}\label{appsec:exp_details}
\subsection{Math \& Code Benchmarks}\label{appsubsec:exp_details_benchmarks}

\textbf{Dataset}~
Our SFT dataset is a modified version of the Llama-Nemotron-Post-Training dataset \cite{bercovich2026extending}, comprising approximately 32 million samples across four core domains: mathematical reasoning, coding, science, and instruction following. The distribution is heavily skewed toward math (66.84\%) and coding (30.62\%), with smaller allocations for science (2.15\%), instruction following (0.17\%), chat (0.12\%), and safety (0.10\%). All target outputs were generated using GPT-OSS \citep{openai2025gptoss120bgptoss20bmodel} in high-reasoning mode.

\textbf{Training Hyperparameters}~
We fine-tuned the LLaDA-Base 8B model for $\sim$400 B tokens, which amounts to 3 epochs of training on our SFT dataset.
We train with a batch size of 1024.
For learning rate we linearly warm-up for 1000 gradient steps until a maximum learning rate of 2.5$\mathrm{e}^{-5}$.
After this peak, we apply cosine decay until a minimum learning rate of 2.5$\mathrm{e}^{-7}$.
We use the \textsc{ADAM-w} optimizer \citep{kingma2014adam} with beta parameters (0.9, 0.999).
Finally, during training it is common to set a $\texttt{min\_t} > 0$ value which biases the sampling of timesteps away from uniform over the unit interval, by shifting all samples to be in the range $[\texttt{min\_t} ,1]$.
For example, in works such as \citet{sahoo2024diffusion}, this value is set to 1$\mathrm{e}^{-3}$.
For our SFT experiments, we found that biasing towards heavier masking during training led to improved performance, hence we set \texttt{min\_t} to 1$\mathrm{e}^{-1}$.

\textbf{Compute resources}~
We fine-tuned both our model and the baseline using 256 NVIDIA B200 GPUs, and conducted all evaluation runs on 8 NVIDIA H100 GPUs.

\textbf{Evaluation}~
For evaluation, we rely on the Nemo Skills library.
We evaluate all models with batch size 1.
This is to mitigate varying padding lengths based on prompt size variation and to enable effective use of early-stopping whenever the \texttt{[EOS]} token is generated.
All evaluated models use a maximum generation length of 1024.

We evaluate 4 benchmarks: HumanEval \citep{chen2021evaluating} and MBPP \citep{austin2021program}, GSM8K \citep{cobbe2021training} and MATH \citep{hendrycks2021measuring}.
We use 0-shot evaluation for each benchmark.

For LLaDA models, we use a semi-AR decoding scheme  \citep{arriola2025block}, as in \citet{nie2025large}, with default block size of 32.

\textbf{Prompts and Evaluation Templates}~
Below, we detail the templates used for evaluating models in Table \ref{tab:performance_results}.
These prompts were used for all models, except the off-the-shelf LLaDA-Base model.
For HumanEval and MBPP the placeholder \textcolor{bloodred}{\texttt{\{question\}}} is replaced by the prompt from the benchmark dataset.
Similarly, for GSM8K and MATH, the placeholder \textcolor{bloodred}{\texttt{\{problem\}}} is replaced by the prompt from the benchmark dataset.
Additionally, we use the default chat templates from the Nemo Skills library.

\begin{figure}[!ht]
    \centering
    \small
    \fbox{\begin{tabular}{lc}
    \makecell{\textbf{HumanEval / MBPP} \\ \textbf{Prompt:}}&
    \begin{minipage}{0.7\textwidth}
    \texttt{Here is a problem for which you need to generate/complete code:}\\
    \textcolor{bloodred}{\texttt{\{question\}}}
    \\
    \\
    \texttt{Please continue to complete the function with python programming language. You are not allowed to modify the given code and do the completion only.}\\
    \texttt{The solution should be in the following format:}\\
    \texttt{```python}\\
    \texttt{\# Your code here}\\
    \texttt{'''}
    \end{minipage} \\ \midrule
    \makecell{\textbf{GSM8K / MATH} \\ \textbf{Prompt:}}&
    \begin{minipage}{0.7\textwidth}
    \texttt{Solve the following math problem. Show your step-by-step reasoning, then provide the final answer inside  \textbackslash boxed\{\}}
    \\
    \\
    \textcolor{bloodred}{\texttt{\{problem\}}}
    \end{minipage}
    \end{tabular}}
    \label{fig:prompts}
\end{figure}

Note that for LLaDA-Base, we do not provide any additional prompt text, we only use the benchmark question as given.

\textbf{Sampling Hyperparameters}~
When evaluating \method~models, we explore different configurations of unmasking and corrector budgets.
In Table \ref{tab:llada_hyperparams}, we detail the sampling hyperparameters used to generate the best performing results for \method~Max reported in Table \ref{tab:performance_results} and Figure \ref{fig:llada_results_quality_efficiency}; see Appendix \ref{appsubsec:correction_configs} for more details.
Note that $T$ represents a maximum unmasking budget, since we apply early stopping on the \texttt{[EOS]} token.
Additionally, $S$ represents a maximum corrector budget per correction loop, because we break the loop iterations if a corrector sequence does not change between rounds.

For Figure \ref{fig:llada_results_pareto} (and Figure \ref{fig:pareto_tps}, below), we use use the Fast configuration, applying a corrector loop every 4 unmasking steps, with a maximum of 2 corrector steps per loop.

\begin{table*}[ht!]
    \centering
    \small
    \caption{Sampling hyperparameters for results attained with \method~and reported in Figure \ref{fig:llada_results_quality_efficiency}.
    }
    \begin{tabular}{llcccc}
    \toprule
    & & \textbf{HumanEval} & \textbf{MBPP} & \textbf{GSM8K} & \textbf{MATH} \\
    \midrule
    \rowcolor{lightgray!25}
    \multicolumn{6}{l}{\textit{Baseline (Vanilla SFT)}} \\
    & Accuracy (\%) & 58.54 & 56.88 & 88.86 & 46.60 \\
    & Average NFEs & 247.8 & 290.3 & 214.2 & 259.4 \\
    & Generation length & 1024 & 1024 & 1024 & 1024 \\
    & Block length & 32 & 32 & 32 & 32 \\
    & Demasking Tokens/Step & 1 & 1 & 1 & 1 \\
    \midrule
    \rowcolor{lightgray!25}
    \multicolumn{6}{l}{\textit{\method~Max}} \\
    & Accuracy (\%) & 72.56 & 69.31 & 92.19 & 55.06 \\
    & Average NFEs & 499.2 & 818.3 & 342.5 & 747.5 \\
    & Generation length & 1024 & 1024 & 1024 & 1024 \\
    & Block length & 32 & 32 & 32 & 32 \\
    & Demasking Tokens/Step & 1 & 1 & 1 & 1 \\
    & Corrector frequency $\omega$ & 2 & 1 & 2 & 2 \\
    & Maximum corrector steps per loop $S$ & 4 & 8 & 1 & 8 \\
    \rowcolor{lightgray!25}
    \multicolumn{6}{l}{\textit{\method~Balanced}} \\
    & Accuracy (\%) & 70.12 & 66.93 & 91.28 & 54.36 \\
    & Average NFEs & 213.9 & 218.7 & 228.7 & 321.2 \\
    & Generation length & 1024 & 1024 & 1024 & 1024 \\
    & Block length & 32 & 32 & 32 & 32 \\
    & Demasking Tokens/Step & 4 & 4 & 4 & 4 \\
    & Corrector frequency $\omega$ & 1 & 1 & 1 & 1 \\
    & Maximum corrector steps per loop $S$ & 8 & 8 & 8 & 8 \\
    \rowcolor{lightgray!25}
    \multicolumn{6}{l}{\textit{\method~Fast}} \\
    & Accuracy (\%) & 60.98 & 62.70 & 90.90 & 49.56 \\
    & Average NFEs & 61.4 & 110.9 & 119.5 & 81.2 \\
    & Generation length & 1024 & 1024 & 1024 & 1024 \\
    & Block length & 32 & 32 & 32 & 32 \\
    & Demasking Tokens/Step & 8 & 4 & 4 & 8 \\
    & Corrector frequency $\omega$ & 2 & 2 & 2 & 2 \\
    & Maximum corrector steps per loop $S$ & 4 & 4 & 4 & 4 \\
    \bottomrule
    \end{tabular}
    \label{tab:llada_hyperparams}
\end{table*}

\textbf{Baselines}~
For all prior work results, other than ReMeDi \citep{huang2025don}, we use open-source weights and evaluate using with batch size 1, maximum generation length of 1024, and early stopping on the \texttt{[EOS]} token.
For ReMeDi Instruct, we report values from \citet{huang2025don}, as we were unable to reproduce or improve upon their results using the open-source model provided.

All model evaluations, other than for the off-the-shelf LLaDA-Base model,  were performed using the Nemo Skills library, using default chat templates from this library and the prompt templates described above.
For LLaDA-Base, since the Nemo Skills repository injects chat templates by default, we evaluate using the lm-eval-harness repository.

For ReMDM, we follow the algorithm proposed by \citet{wang2025remasking} for applying this method to LLaDA.
Namely, for each block of 32 tokes, once 28 tokens have been generated, we enable a ReMDM loop, where for 32 iterations we remask 2 tokens that had the lowest confidence at the time at which they were decoded and unmask 2 tokens based on their confidence.
 Hence, at the end of the ReMDM loop, there are still 28 unmasked and 4 masked tokens, at which point we finish generating using the standard LLaDA confidence-based sampling.

\subsection{Guided Molecule Design}\label{appsubsec:exp_details_guidance}
For this experiment, we follow the setup detailed in \citet{schiff2024simple}.

\textbf{Dataset}~
We train on the QM9 dataset \citep{ramakrishnan2014quantum, ruddigkeit2012enumeration}, which consists of $\sim$133k molecules represented as SMILES strings \citep{weininger1988smiles}.
We use the RDKit library \citep{landrum2013rdkit} to add the ring count and drug likeness (QED; \cite{bickerton2012quantifying}) annotations.
The dataset was tokenized using a regular expression tokenizer \citep{schwaller2019molecular}.
We use sequence length of $L=32$, with right-padding.

For each property, we generate binary labels that indicate whether a sample is below or above the 90th percentile of training samples.
For discrete classifier-free-guidance \citep{schiff2024simple}, we train with this label for conditional models, and randomly `drop it out' 10 percent of the time by replacing it with a `masked' label to simulate unconditional modeling.

\textbf{Hyperparameters}~
Hyperparameters follow \citet{schiff2024simple}.
Namely, we use a DiT-style \citep{peebles2023scalable} backbone with 92.4M parameters.
Models were trained with a batch size 2048, and we perform 25k gradient updates.
We use a maximum learning rate of 3$\mathrm{e}^{-4}$ that we linearly warm-up to for 1000 steps.
After this peak we apply cosine decay until a minimum learning rate of 3$\mathrm{e}^{-6}$.
We use the \textsc{ADAM-w} with beta parameters $(0.9, 0.999)$.

Of note, when training \method~models for ring count, we found it beneficial to eliminate the `copy over' parameterization of the denoising network $\x_\theta$ proposed in \citet{sahoo2024simple}.
That is, we do not enforce that $\x_\theta$ simply copy over any token positions $\ztl \neq \mask$.
For training models for the QED property, we maintained this copy-over parameterization.

\textbf{Compute Resources}~
We trained our model on 8 A5000 GPUs.

\textbf{Evaluation}~
We generate 1024 samples from our model using various unmasking budgets $T$ and guidance temperature $\gamma$.
Of note, when applying the corrector model forward passes, we only use the conditional model, i.e., $\gamma =1$.

We use the RDKit library to parse generated samples.
Of the valid strings (those that  can be parsed) we retain unique samples that are not found in the original QM9 dataset (novel).
We then use RDKit to measure the property of interest for these novel samples.

\textbf{Sampling Hyperparameters}~
For both ring count and QED maximization we use $\omega = T / 2$ for corrector loop frequency and $S = T /16$ for steps per loop.
Evaluation was performed with an exponential moving average (EMA) checkpoint.
During training we used an EMA decay factor of $0.9999$.

\textbf{Baselines}~
Values for AR, UDLM, MDLM, and ReMDM were taken from \citet{wang2025remasking}.

For ReMeDi \citep{huang2025don} and GStar \citep{meshchaninov2025guided}, we re-implement these methods and train them on the QM9 dataset.
Both of these methods entail training a classifier that predicts incorrect tokens to remask.

For ReMeDi, we jointly train their ``Unmasking Policy Stream'' (UPS) alongside an MDLM model.
We use the ``dual stream'' architecture recommended by \citet{huang2025don}, with 4 DiT layers for the UPS, connected to the main backbone at layers 0, 3, 7, and 10.
We set the UPS loss weight at $\lambda_{\text{UPS}} = 0.3$.
For the proportion of unmasked tokens that get flipped to random tokens we use $\rho_{t, \text{incorrect}} = 0.1$.
Finally, as ReMeDi was not originally proposed within th context of classifier-free guidance, to have a fair comparison, we perform a hyperparameter sweep for this model over the following parameters: whether to apply the classifier-free guidance (CFG) to only the ``Token Prediction Stream'' (i.e., the MDLM output) or also to the UPS and whether to use sampling or greedy-max selection when choosing which tokens to remask.
We found best results when applying CFG to both streams and when using proportional sampling, not greedy-max selection, for remasking positions.

In GStar, the error prediction head is trained with a frozen diffusion backbone model, hence we use the pre-trained MDLM from \citet{wang2025remasking} and freeze its weights.
The error prediction head consists of a \texttt{LayerNorm} and linear projection from the final hidden token representations of the frozen backbone.
We employ the ``GStar+'' sampling algorithm and for fair comparison we perform a sweep over the following parameters: whether to apply the CFG to only the MDLM output or also to the remasking prediction head, whether to use Gumbel top-$k$ or greedy top-$k$ selection during remasking, the temperature for both the MDLM (sweep over $\{0.7, 1.0, 1.3\}$) and remasking head output (sweep over $\{2, 4, 8, 16\}$), and the remasking switch-on time (sweep over $\{0.1, 0.2, 0.3, 0.4\}$).
For the ring count property, we found best results using Gumbel top-$k$, switch time 0.2, denoiser temperature 0.7, and prediction head temperature 16.
For the QED property, we found best results using greedy top-$k$, switch time 0.4, and denoiser temperature 1.0.

\subsection{Unconditional Text Generation}\label{appsubsec:exp_details_text}
For this experiment, we follow the setup described in \citet{sahoo2024simple}.

\textbf{Dataset}~
We train models on the OpenWebText (OWT; \cite{Gokaslan2019OpenWeb}) dataset.
We tokenized using the \texttt{gpt-2} \citep{radford2019language} tokenizer and created sequences of $L=1024$ tokens by wrapping samples and separating them with an \texttt{[EOS]} token. 
We also also place an \texttt{[EOS]} token at the beginning and end of each sequence.

\textbf{Hyperparameters}~
As in \citet{sahoo2024simple}, we use a DiT backbone with 170M parameters.
We used a batch size of 512 and applied 1M gradient updates.
We use a constant learning rate of 3$\mathrm{e}^{-4}$ that we linearly warm-up to for 2500 steps.
We use the \textsc{ADAM-w} optimizer with beta parameters (0.9, 0.999).
As described in Appendix \ref{appsubsec:exp_details_benchmarks}, we use a \texttt{min\_t} value of 1$\mathrm{e}^{-1}$, when training \method~models on OWT.

For training \method~in this setting, we found it beneficial to eliminate the `copy over' parameterization of the denoising network $\x_\theta$.

\textbf{Compute Resources}~
We trained our model on 32 H100 GPUs.

\textbf{Evaluation}~
We follow the evaluation protocol from \citet{wang2025remasking}.
Specifically, we generate 5000 samples and compute the MAUVE metric \citep{pillutla2021mauve}, generative perplexity under the \texttt{gpt2-large} model, and entropy of generated tokens.

\textbf{Modified Training Objective}~
For this experiment, we use a slightly modified objective relative to that presented in Algorithm \ref{alg:train}.
Specifically, we adjust the self-correction term $\mathcal{L}^{\mathpzc{SC}}$ so that gradients for tokens which the original denoiser model incorrectly predicts are steeper.
To accomplish this, we replace the self-correction loss term from Algorithm \ref{alg:train} with the following:
\begin{align}\label{eq:modified_sc}
    \Tilde{\mathcal{L}}^{\mathpzc{SC}}(\theta) := \frac{\atdot}{1-\at}\sum_{\ell=1}^L \delta_{\ytl, \xl} \langle \xl_\theta(\ytL), \xl \rangle + (1 - \delta_{\ytl, \xl}) \log\langle \xl_\theta(\ytL), \xl \rangle.
\end{align}
The modified loss term in (\ref{eq:modified_sc}) has the desirable property that token positions where the original denoiser model predicts a mistake, i.e. $\delta_{\ytl, \xl} = 0$, are prioritized.
This is accomplished by the fact that the gradient for the standard cross-entropy loss used for these token positions is steeper than that for positions where the denoising model is `already correct', i.e., $\delta_{\ytl, \xl} = 1$, where the loss is taken without the logarithm.

\textbf{Modified Sampling Procedure}~
Additionally, for the samples presented in Table \ref{tab:owt}, we use a slightly modified sampling procedure.
Rather than simply using $\argmax$ decoding at every token position for the $\texttt{sample}$ sub-routine in the \texttt{corrector} function in Algorithm \ref{alg:sample}, at each corrector step, we sort all positions by the confidence of the $\argmax$ token.
Then, for the top-$k$ positions, we use the $\argmax$ token, and for the remaining positions, we leave them unchanged.
We use $k = 100$ and found that this procedure better encouraged sample diversity and improved MAUVE scores.
In Algorithm \ref{alg:corrector_sample_owt}, we detail the top-$k$ sampling sub-routine that we use for corrector sampling in the OWT experiments.

\begin{algorithm}[!ht]
    \small
    \caption{Corrector \texttt{sample} sub-routine used in OWT experiments}
    \label{alg:corrector_sample_owt}
    \begin{algorithmic}[1]
    \STATE \textbf{Input:} Corrector model input $\ytL$, corrector model output $\texttt{corrector\_logits}$, parameter $k$    
    \STATE $\hat{\y}^{1:L} \leftarrow \argmax(\texttt{corrector\_logits})$
    \STATEX \textcolor{ourgray}{\texttt{\# `gather' is an API that returns values of its first input at the indices specified by its second input, see for example torch.gather (with dim=-1)}}
    \STATE $\texttt{confidence\_scores} \leftarrow \texttt{gather}(\texttt{corrector\_logits}, \hat{\y}^{1:L})$
    \FOR{$\ell=1$ {\bfseries to} $L$}
        \IF{$\hat{\y}^\ell == \ytl$}
            \STATEX \textcolor{ourgray}{\texttt{\# Only consider positions where corrector output is different from input}}
            \STATE $\texttt{confidence\_scores}^\ell \leftarrow -\infty$
        \ENDIF
    \ENDFOR
    \STATEX \textcolor{ourgray}{\texttt{\# `top\_k' is an API that returns the top k argument positions for its first input}}
    \STATE $\texttt{top\_k\_indices} \leftarrow \texttt{top\_k}(\texttt{confidence\_scores}, k)$
    \FOR{$\ell=1$ {\bfseries to} $L$}
        \IF{$\ell \in \texttt{top\_k\_indices}$}
            \STATE $\ytl \leftarrow \hat{\y}^\ell$
        \ELSE
            \STATE $\ytl \leftarrow \ytl$
        \ENDIF
    \ENDFOR
    \STATE \textbf{Return} $\ytL$
    \end{algorithmic}
\end{algorithm}

\textbf{Sampling Hyperparameters}~
For \method, we match the inference budget of baseline results by using number of unmasking steps equal to $T / 4$ per column, performing a corrector loop at every iteration, $\omega = 1$, and applying 3 corrector steps per loop, $S = 3.$
We use $k = 100$ as the top-$k$ parameter for Algorithm \ref{alg:corrector_sample_owt}.
Evaluation was performed with an EMA checkpoint.
During training we used an EMA decay factor of $0.9999$.

\textbf{Baselines}~
Values for the baseline models were taken from \citet{wang2025remasking}, except for PRISM results which were taken from \citet{kim2025fine}.
For PRISM, results correspond to the `PRISM-loop' method presented in Table 3 of \citet{kim2025fine}.

For GIDD \citep{von2025generalized}, we use the $p_u = 0.2$ open-source checkpoint provided by the authors.
Since this model was trained on sequence length of $L=512$, in contrast to the other comparisons and our model which use $L=1024$, we evaluate the GIDD model with $L=512$.
We use temperature parameter equal to 0.1, and we allow an equal number of unmasking and self-correction steps.
Since the self-correction algorithm proposed in GIDD can terminate prior to exhausting the full budget, in Table \ref{tab:gidd_params} we denote the GIDD configurations used to match the GIDD results to the various $T$ budgets in Table \ref{tab:owt}.
Note that for certain $T$, we err on the side of allowing the GIDD model to exceed the total budget $T$.

\begin{table}[!ht]
    \centering
    \small
    \caption{Parameters used for GIDD \citep{von2025generalized} unmasking and self-correction budgets corresponding to the results in Table \ref{tab:owt}.}
    \begin{tabular}{lccc}
    \toprule
    $T=$ & Unmasking steps & Maximum Self Correction Steps & Avg. NFEs \\
    \midrule
    128 & 64 & 64 & 123.9 \\
    256 & 128 & 128 & 212.6 \\
    512 & 512 & 512 & 601.3 \\
    1024 & 1024 & 1024 & 1114.2 \\
    \bottomrule
    \end{tabular}
    \label{tab:gidd_params}
\end{table}

\section{Additional Experimental Results}\label{appsec:exp_results}

\subsection{Quality-Efficiency Tradeoffs}\label{appsubsec:exp_results_tps}
In Table \ref{tab:quality_efficiency}, we compare the baseline method to the \method~Max, Balanced, and Fast configurations and report throughput (in tokens per second, TPS) as well as the accuracy and efficiency gains.
In Figure \ref{fig:pareto_tps}, we also repeat our Pareto frontier results from Figure \ref{fig:llada_results_pareto}, with throughput reported.

\begin{table*}[ht!]
    \centering
    \small
    \caption{Quality vs. Efficiency Metrics. We report Accuracy (\%), Average Number of Function Evaluations (NFEs), and Throughput (Tokens/Second). Best accuracy gain and efficiency results are \textbf{bolded}.
    Improvements over the baseline are highlighted in \textcolor{OliveGreen}{green}.}
    \begin{tabular}{llcccc}
    \toprule
    & & \textbf{HumanEval} & \textbf{MBPP} & \textbf{GSM8K} & \textbf{MATH} \\
    \midrule
    \rowcolor{lightgray!25}
    \multicolumn{6}{l}{\textit{Baseline (Vanilla SFT)}} \\
    & Accuracy & 58.54 & 56.88 & 88.86 & 46.60 \\
    & Avg. NFEs & 247.8 & 290.3 & 214.2 & 259.4 \\
    & Tokens/Second & 123.14 & 131.04 & 148.92 & 153.15 \\
    \midrule
    \rowcolor{lightgray!25}
    \multicolumn{6}{l}{\textit{\method~Max}} \\
    & Accuracy (+ gain) & \textbf{72.56} (\textcolor{OliveGreen}{+14.0}) & \textbf{69.31} (\textcolor{OliveGreen}{+12.4}) & \textbf{92.19} (\textcolor{OliveGreen}{+3.3}) & \textbf{55.06} (\textcolor{OliveGreen}{+8.5}) \\
    & Avg. NFEs ($\times$ speedup) & 499.2 (0.50$\times$) & 818.3 (0.35$\times$) & 342.5 (0.63$\times$) & 747.5 (0.35$\times$) \\
    & Tokens/Second ($\times$ speedup) & 59.66 (0.48$\times$) & 37.48 (0.29$\times$) & 103.83 (0.70$\times$) & 61.38 (0.40$\times$) \\
    \rowcolor{lightgray!25}
    \multicolumn{6}{l}{\textit{\method~Balanced}} \\
    & Accuracy (+ gain) & 70.12 (\textcolor{OliveGreen}{+11.6}) & 66.93 (\textcolor{OliveGreen}{+10.1}) & 91.28 (\textcolor{OliveGreen}{+2.4}) & 54.36 (\textcolor{OliveGreen}{+7.8}) \\
    & Avg. NFEs ($\times$ speedup) & 213.9 (\textcolor{OliveGreen}{1.16$\times$}) & 218.7 (\textcolor{OliveGreen}{1.33$\times$}) & 228.7 (0.94$\times$) & 321.2 (0.81$\times$) \\
    & Tokens/Second ($\times$ speedup) & 137.07 (\textcolor{OliveGreen}{1.11$\times$}) & 131.15 (1.00$\times$) & 158.33 (\textcolor{OliveGreen}{1.06$\times$}) & 141.94 (0.93$\times$) \\
    \rowcolor{lightgray!25}
    \multicolumn{6}{l}{\textit{\method~Fast}} \\
    & Accuracy (+ gain) & 60.98 (\textcolor{OliveGreen}{+2.4}) & 62.70 (\textcolor{OliveGreen}{+5.8}) & 90.90 (\textcolor{OliveGreen}{+2.0}) & 49.56 (\textcolor{OliveGreen}{+3.0}) \\
    & Avg. NFEs ($\times$ speedup) & \textbf{61.4} (\textcolor{OliveGreen}{4.04$\times$}) & \textbf{110.9} (\textcolor{OliveGreen}{2.62$\times$}) & \textbf{119.5} (\textcolor{OliveGreen}{1.79$\times$}) & \textbf{81.2} (\textcolor{OliveGreen}{3.19$\times$}) \\
    & Tokens/Second ($\times$ speedup) & \textbf{455.01} (\textcolor{OliveGreen}{3.70$\times$}) & \textbf{251.92} (\textcolor{OliveGreen}{1.92$\times$}) & \textbf{304.33} (\textcolor{OliveGreen}{2.04$\times$}) & \textbf{512.27} (\textcolor{OliveGreen}{3.34$\times$}) \\
    \bottomrule
    \end{tabular}
    \label{tab:quality_efficiency}
\end{table*}

\begin{figure}
    \centering
    \includegraphics[width=\linewidth]{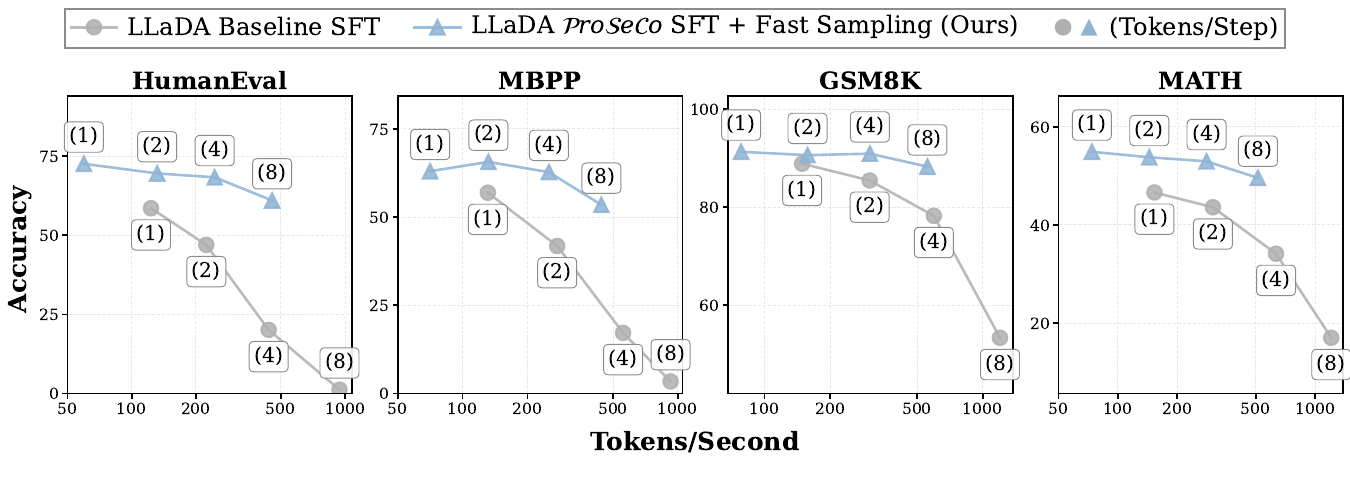}
    \caption{Pareto frontier analysis with throughput (tokens per second, TPS) reported.
    }
    \label{fig:pareto_tps}
\end{figure}

\begin{figure}
\centering
\includegraphics[width=\linewidth]{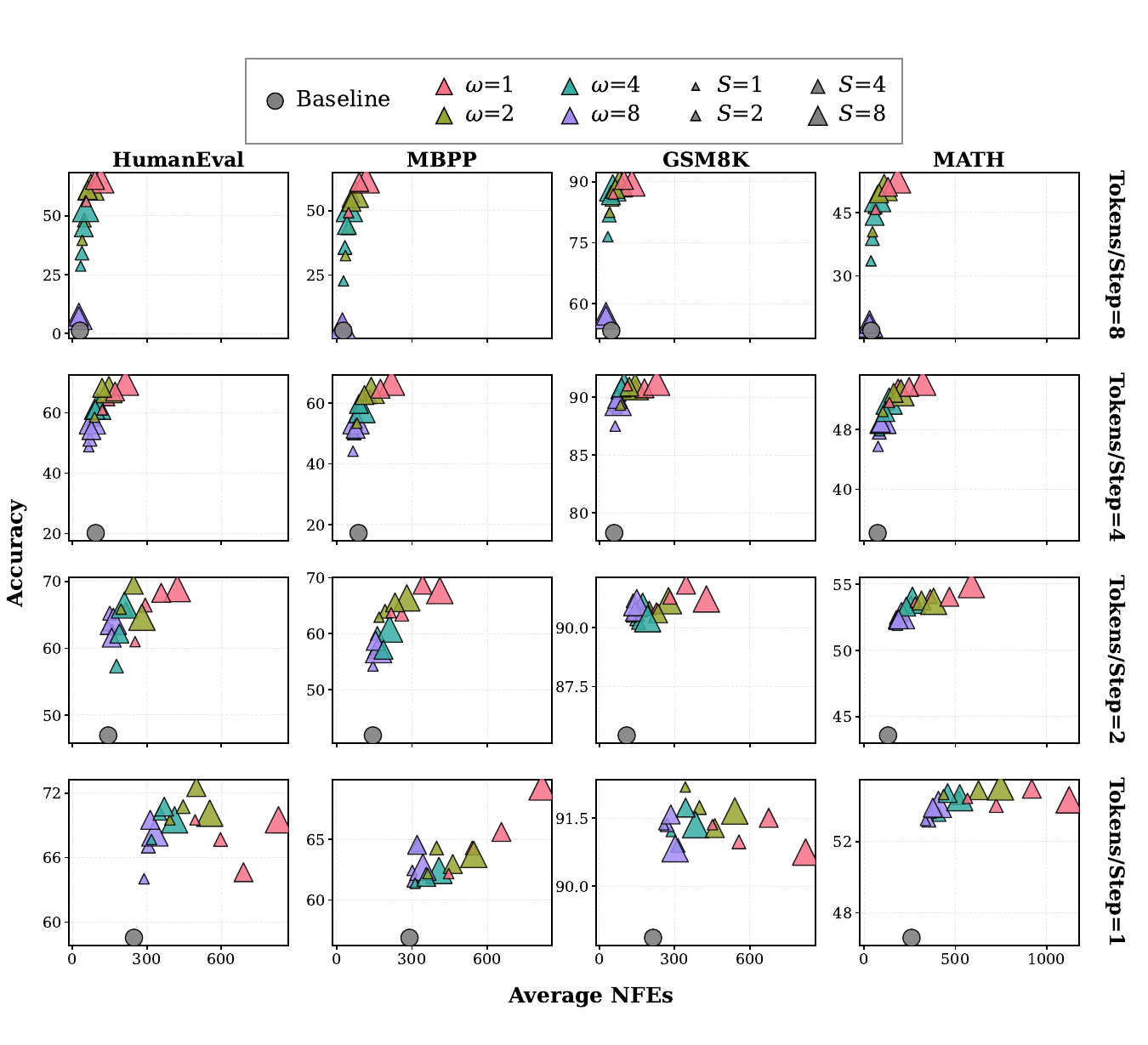}
\caption{Ablation: Performance across various configurations of corrector steps.
Frequency $\omega \in \{1, 2, 4, 8\}$ (denoted by color) and $S \in \{1, 2, 4, 8\}$ (denoted by marker size).
}
\label{fig:llada_results_budget}
\end{figure}

\subsection{Ablation: Robustness to Corrector Parameters}\label{appsubsec:ablation}
In Figure \ref{fig:llada_results_budget}, we demonstrate that our model, which underwent SFT with \method~training, is highly robust to the choice of the corrector sampling parameters, i.e., the frequency of corrector loops $\omega$ and number of corrector steps per loop $S$.
We look at various levels of unmasking parallelism, ranging from fully sequential at $\text{tokens/step} = 1$ to $\text{tokens/step} = 8$, and evaluate our model with $\omega \in \{1, 2, 4, 8\}$ and $S \in \{1, 2, 4, 8\}$.
In particular, we see that at every level of decoding parallelism, our model can beat the best performing Baseline SFT results for all choices of $\omega$ and $S$.

A few additional observations from Figure \ref{fig:llada_results_budget}, in the first two rows, corresponding to `fast sampling' regimes, in order to match or exceed the best baseline accuracy, which is attained with $\text{tokens/step} = 1$, we require more frequent corrector loops $\omega \in \{1, 2\}$.
In these settings we can drastically improve efficiency relative to the baseline model.
In this fast sampling regime, we also see a general trend that, for a fixed corrector budget (i.e., $S \cdot L / \omega$) , more frequent but shorter correction loops are typically more effective.
Additionally, when unmasking sampling increases to $\text{tokens/ step} \in \{1, 2\}$, we find the expected trend that scaling both frequency of correction loops and number of steps per loop leads almost uniformly leads to improved sample quality, at the cost of additional NFEs.

\subsection{Selecting the Max, Balanced, and Fast Points}
\label{appsubsec:correction_configs}
From the hyperparameter sweep described in Appendix \ref{appsubsec:ablation}, we selected representative points as \method~Max, Balanced, and Fast.
The chosen parameters for each of these are detailed in Table \ref{tab:llada_hyperparams}.
For the Max configuration, we selected the best overall \method~performance from the sweep performed in Figure \ref{fig:llada_results_budget}.
For the Fast configuration, we selected points from parallel generation of either $\text{tokens/step} = 4$ or $\text{tokens/step} = 8$ where we attain better accuracy than the best baseline performance, which uses $\text{tokens/step} = 1$.
For the Balanced configuration, we anchor $\text{tokens/step} = 4$, and select a configuration that pushes the boundary to the north-west across all benchmarks.

\subsection{Ablation: Self-correction Sampling with Standard MDM}\label{appsubsec:ablation_correct_mdm}
In Table \ref{tab:ablation_correct_mdm}, we demonstrate the importance of \method~training, by comparing self-correction sampling applied to a model with standard MDM SFT vs. one with \method~loss and Algorithm \ref{alg:train}.
These results underscore the importance of training a model to self-correct.
With standard MDM training, the model does not learn to produce meaningful predictions at already unmasked token locations.
Our method unlocks this ability, and therefore benefits from the self-correction sampling algorithm.

\begin{table}[!ht]
    \centering
    \small
    \caption{Ablation: Applying self-correction sampling to models trained with standard MDM loss leads to worse performance, underscoring the importance of the \method~framework.
    Pass@1 accuracy (\%) performance is reported.}
    \label{tab:ablation_correct_mdm}
    \begin{tabular}{l cccc}
        \toprule
        \textbf{Training + Sampling} & \textbf{HumanEval} & \textbf{MBPP} & \textbf{GSM8K} & \textbf{MATH} \\
        \midrule
        Vanilla SFT + Standard MDM & 58.54 & 56.88 & 88.86 & 46.60 \\
        Vanilla SFT + \method~Balanced & 1.83  & 1.85  & 60.42 & 27.06    \\
        \midrule
        \method~SFT + \method~Balanced & 72.56 & 69.31 & 92.19 & 55.06 \\
        \bottomrule
    \end{tabular}
\end{table}

\subsection{Ablation: Selecting \texorpdfstring{$\lambda_t$}{lambda\_t}}\label{appsubsec:ablation_lambda}
Below we present the ablation on the self-correction loss term weight.
We run training for discrete classifier guidance on the ring count property with fixed $\lambda \in \{0.1, 1, 10\}$ and a time-varying weighted $ \lambda_t \in \{ 0.1\cdot \tfrac{\atdot}{(1-\at)}, 1\cdot \tfrac{\atdot}{(1-\at)}, 10\cdot \tfrac{\atdot}{(1-\at)}\}.$
We report results in Figure \ref{fig:lambda_ablation}.
We find that using the $\tfrac{\atdot}{1-\at}$ weighting consistently improves results.
Furthermore, our model is robust to different scalings of this time-dependent weighting.

\begin{figure}
    \centering
    \includegraphics[width=0.55\linewidth]{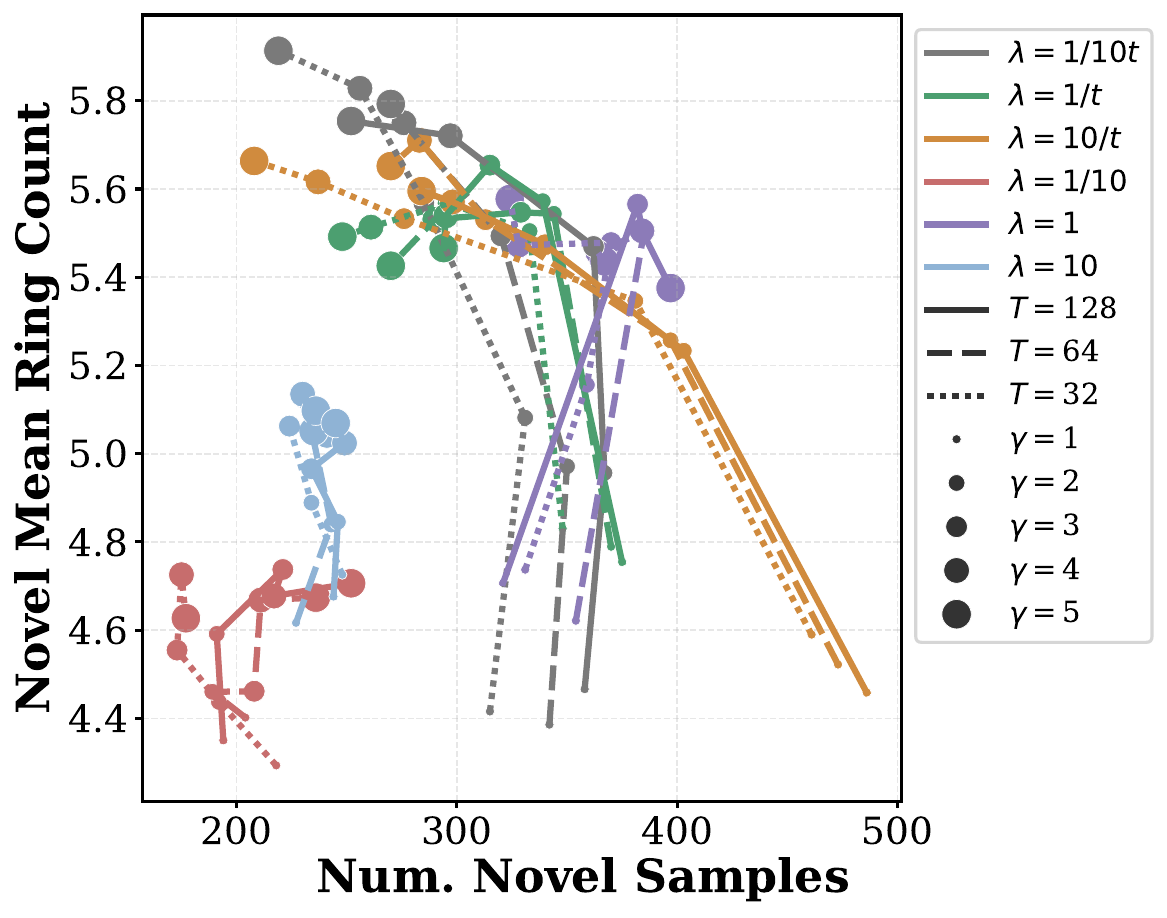}
    \caption{Ablation: Selecting self-correction loss weight $\lambda_t$.
    We retrain the discrete classifier model for maximizing ring count with various $\lambda_t$ values.
    }
    \label{fig:lambda_ablation}
\end{figure}

\subsection{Error Bars for Table \ref{tab:owt}}
In Table \ref{tab:owt_error_rows}, we report mean $\pm$ standard error from four random seeds used during sampling for our model's results on unconditional generation (Table \ref{tab:owt}).

\begin{table*}[t]
    \centering
    \small
    \caption{Mean $\pm$ standard error for OWT unconditional generation metrics for \method.}
    \begin{tabular}{l|ccc}
    \toprule
    $T=$ & MAUVE ($\uparrow$) & Gen. PPL ($\downarrow$) & Entropy ($\uparrow$) \\
    \midrule
    128  & 0.311 $\pm$ 0.006 & 23.045 $\pm$ 0.031 & 5.452 $\pm$ 0.002 \\
    256  & 0.572 $\pm$ 0.006 & 16.52 $\pm$ 0.014  & 5.376 $\pm$ 0.005 \\
    512  & 0.622 $\pm$ 0.005 & 13.196 $\pm$ 0.031 & 5.293 $\pm$ 0.001 \\
    1024 & 0.592 $\pm$ 0.006 & 10.92 $\pm$ 0.014  & 5.221 $\pm$ 0.005 \\
    \bottomrule
    \end{tabular}
    \label{tab:owt_error_rows}
\end{table*}

\section{Generated Samples}\label{appsec:samples}
\subsection{LLaDA \texorpdfstring{\method}{ProSeCo}~SFT Samples}\label{appsubsec:samples_llada}
In Figures \ref{fig:humaneval_max_sample} and \ref{fig:gsm8k_max_sample1}, we present sample generations for the HumanEval and GSM8K datasets, respectively, using the maximum accuracy configuration for each benchmark (see Table \ref{tab:llada_hyperparams}).

\subsection{\texorpdfstring{\method}{ProSeCo}~Unconditional Generation Samples}\label{appsubsec:samples_owt}
In Figure \ref{fig:owt_256_samples}, we present a sample generated from the \method~model trained on OWT.
We use total sample budget of $T=256$, which consists of $64$ unmasking steps, a corrector loop every $\omega=1$ step, and $S=3$ corrector steps per loop.

\section{Assets}\label{appsec:assets}
In Table \ref{tab:datasets}, we list the corresponding licenses for datasets used in this work.
\begin{table}[ht!]
    \centering
    \small
    \caption{Datasets and corresponding licenses.}
    \begin{tabular}{lc}
    \toprule
        Dataset & Licence \\
        \midrule
         GSM8K \citep{cobbe2021training} & MIT \\
         HumanEval \citep{chen2021evaluating} & MIT \\
         Llama-Nemotron: Efficient Reasoning Models \citep{bercovich2026extending} & CC BY 4.0 \\
         MBPP \citep{austin2021program} & MIT \\
         MinveraMath \citep{hendrycks2021measuring} & MIT \\
         OpenWebText \citep{Gokaslan2019OpenWeb} & Creative Commons CC0 license (``no rights reserved'')\\
        QM9 \cite{ramakrishnan2014quantum, ruddigkeit2012enumeration}  & N/A \\
        \bottomrule
    \end{tabular}
    \label{tab:datasets}
\end{table}

In Table \ref{tab:software}, we list the corresponding licenses for software packages used in this work.

\begin{table}[ht]
    \centering
    \small
    \caption{Software and corresponding licenses.}
    \begin{tabular}{ll}
    \toprule
        Library & License \\
        \midrule
        HuggingFace~\citep{wolf2019huggingface} & Apache 2.0 \\
        Hydra~\citep{Yadan2019Hydra} & MIT \\
        Fast-DLLM~\citep{wu2025fast} & Apache 2.0 \\
        Language Model Evaluation Harness \citep{eval-harness} & MIT \\
        Matplotlib~\citep{Hunter:2007} & \href{https://matplotlib.org/stable/users/project/license.html}{Matplotib license} \\
        Mauve \citep{pillutla2021mauve} & GNU General Public License, Version 3 \\
        MDLM \citep{sahoo2024simple} & Apache 2.0 \\
        Nemo Skills & Apache 2.0 \\
        NumPy~\citep{harris2020array} & \href{https://numpy.org/doc/stable/license.html}{NumPy license} \\
        OmegaConf & BSD 3-Clause \\
        Pandas \citep{reback2020pandas} & BSD 3-Clause ``New" or ``Revised" \\        PyTorch~\citep{Paszke_PyTorch_An_Imperative_2019} & BSD-3 Clause \\
        PyTorch Lightning~\citep{Falcon_PyTorch_Lightning_2019} & Apache 2.0 \\
        RDKit \cite{landrum2013rdkit} & BSD 3-Clause ``New" or ``Revised" \\
        Seaborn~\citep{Waskom2021} & BSD 3-Clause ``New" or ``Revised" \\
        TorchMetrics & Apache 2.0 \\
        UDLM \citep{schiff2024simple} & Apache 2.0 \\
        \bottomrule
        \end{tabular}
    \label{tab:software}
\end{table}

\section{Impact Statement}\label{appsec:impacts}
Our paper is intended as a contribution to the machine learning community.
Our work holds the promise for positive impact by accelerating inference and improving quality of language models.
While, our method is also subject to the dangers of misuse present in language modeling, especially in the application to biological sequences, given the limited scale of models that we explore, the experiments described in this work do not pose any significant risks.

\section{LLM Usage Disclosure}\label{appsec:llm}
Large Language Models (LLMs) were used to format and edit this manuscript and for formatting figures.
Additionally, integrated LLMs in the code editing software used while developing this work were used for auto-complete suggestions and in debugging.

LLMs were \textbf{not} used for the core development of our methodology, derivation of key results, nor the programming of our main code and experiments.

\newpage
\begin{figure}[ht!]
\centering
\small
\begin{samplebox}{HumanEval Sample}
\textbf{Prompt shown to the model}
\begin{SampleVerbatim}
Here is a problem for which you need to generate/complete code:
  
def make_a_pile(n):
	"""
	Given a positive integer n, you have to make a pile of n levels of stones.
	The first level has n stones.
	The number of stones in the next level is:
		- the next odd number if n is odd.
		- the next even number if n is even.
	Return the number of stones in each level in a list, where element at index
	i represents the number of stones in the level (i+1).

	Examples:
	>>> make_a_pile(3)
	[3, 5, 7]
	"""

  Please continue to complete the function with python programming language. You are not allowed to modify the given code and do the completion only.

  The solution should be in the following format:
  ```python
  # Your code here
  ```
\end{SampleVerbatim}

\textbf{Model output}
\begin{SampleVerbatim}
```python
def make_a_pile(n):
    """
    Given a positive integer n, you have to make a pile of n levels of stones.
    The first level has n stones.
    The number of stones in the next level is:
        - the next odd number if n is odd.
        - the next even number if n is even.
    Return the number of stones in each level in a list, where element at index
    i represents the number of stones in the level (i+1).

    Examples:
    >>> make_a_pile(3)
    [3, 5, 7]
    """
    pile = []
    for i in range(1, n + 1):
        if i % 2 == 1:          # odd level -> next odd number
            pile.append(2 * i - 1)
        else:                   # even level -> next even number
            pile.append(2 * i)
    return pile
```
\end{SampleVerbatim}
\end{samplebox}
\caption{Example of \method's generated response on the HumanEval dataset using the max accuracy configuration (see Table \ref{tab:llada_hyperparams}).
}
\label{fig:humaneval_max_sample}
\end{figure}
\newpage
\begin{figure}
\centering
\small
\begin{samplebox}{GSM8K Sample; Expected answer: 6}
\textbf{Prompt shown to the model}
\begin{SampleVerbatim}
Solve the following math problem. Show your step-by-step reasoning, then provide the final answer inside \boxed{}.

  Solo has to read 4 pages from his Science textbook, 20 pages from his Social Studies textbook, 7 pages from his History textbook and 8 pages from his Geography textbook. Solo read 15 pages on Monday. If he has 4  more days to complete all his reading, how many pages does he need to read, on average, in one day?

\end{SampleVerbatim}

\textbf{Model output}
\begin{SampleVerbatim}
**Step‑by‑step reasoning**

1. **Total pages to be read**  
   \[
   4\ (\text{Science}) + 20\ (\text{Social Studies}) + 7\ (\text{History}) + 8\ (\text{Geography}) = 39\ \text{pages}
   \]

2. **Pages already read**  
   \[
   15\ \text{pages on Monday}
   \]

3. **Pages left to read**  
   \[
   39 - 15 = 24\ \text{pages}
   \]

4. **Number of days left**  
   \[
   4\ \text{more days}
   \]

5. **Average pages per day**  
   \[
   \frac{24\ \text{pages}}{4\ \text{days}} = 6\ \text{pages per day}
   \]

\[
\boxed{6}
\]
\end{SampleVerbatim}
\end{samplebox}

\caption{Example of \method's generated response on the GSM8K dataset using the max accuracy configuration (see Table \ref{tab:llada_hyperparams}).}
\label{fig:gsm8k_max_sample1}
\end{figure}
\newpage
\begin{figure}[ht!]
\centering
\scriptsize
\begin{samplebox}{OWT Generated Sample}
``\textbf{\textbar endoftext\textbar} campaigns trying to figure out how to raise money for a ballot measure America’s political dailies have not adopted an information-delivery strategy. Teich’s most recent foray was, according to its spokeswoman, a brand new social media campaign that it took a break from advertising the previous year, and launched on its website. Both stem from a tradition of public-sector philanthropy, and position the company as a public utility whose funding is in no way tied to government spending or health care. We have reached out to Teich for more information, and so far have received no response.\newline\newline But Teich has all but abandoned its information delivery strategy—or at least when it comes to what it calls its “frontiers.''\newline\newline While its Indiegogo campaign has been focused on social issues, much of it has centered on public health and safety initiatives in an effort to reduce the number of traffic deaths, make driving less dangerous, reduce dependence on oil and to improve the health of people in a developing country. Its\newline\newline budgeted nearly \$60 million in fiscal year 2016, according to Teich Institute, a nonprofit that tracks Teich’s spending.\newline\newline Back in 2014, the year it launched, the company calculated how much it could spend on new software, or new nurses, or programs that focus on technology. It spends a big chunk of that money on budgeting—but there are different approaches to health care spending. For instance, some decide what kind of money they spend; others decide how much money they spend on a particular service. So how much Teich spends depends on the investment.\newline\newline You’ve heard of this about health-care. The Obama health-care bill seemed to cost \$200 billion—but neither the White House nor the administration said anything about how much it cost. Teich has put its total at \$20 billion.\newline\newline “The big question for me is not just whether Teich decides to spend too much, but whether TARP looks at spending too much,'' says Erin McAllister, executive director of the Teich Public Policy Institute. That calculation helps explain how much Teich spent to fight the opioid epidemic here in Washington D.C., she says, though she says she doesn’t know just how much that money went to a particular cause.\newline\newline Students, for instance, know how to gamble. “They probably knew what they were going to spend,'' she says. “What surprises me is that people don’t know when they buy something that is going to have a bigger impact on an institution than an investment.''\newline\newline Subscribe to Smart Money's Smart Money app—you can download it here.\newline\newline In some cases, Teich says it didn’t know how much of its money was spent, so it didn’t disclose how much the company spent. That likely happened in 2010 and 2012, for “from an investment perspective, you’re basically going the other way to buy something that’s going to have a bigger impact than an investment,'' McAllister says. “You may see something that has less impact than investment, and you may see something that has more impact and less impact than investment.''\newline\newline In New York City—where Teich’s Indiegogo campaign spent nearly \$50 million in fiscal year 2016—it managed to convince advocacy groups to spend the bulk of its money on television ads. The company and its partners estimated that the campaign would finance net \$2 million in television ads over the next 12 months. In other cities, similar things happen year after year, McAllister says.\newline\newline So much depends on the investment. The public-sector infrastructure bought by Teich remains relatively small. Its advertising strategies have changed little over the years, and they continue to attract big-name attention.\newline\newline Of its most recent investments in public health, Ken Farrell, a lawyer with the President’s Office, agrees that Teich has designed previous advertising campaigns that have largely paid off. But McAllister says the payoff could be quite big, depending on how much the company spends.\newline\newline The company’s sticker price has been criticized by elected officials, environmental groups and those who ignore the fossil fuel industry at their peril. But its spending on public health initiatives has been relatively steady—it’s spent almost \$650 million since 2010. Neither Trump or his administration has really pushed for new regulations despite concerns about their potential benefits. And experts aren’t totally convinced that new taxes will improve public health—especially because of the already unacceptably high tolls and highway tolls in the United States. Tax advocates will argue that new taxes will be good for everyone—but are skeptical.\newline\newline “It’s compelling,'' Stan Diem, a political science professor at Wesleyan University says. “But there is no evidence that it is\textbf{\textbar endoftext\textbar}"
\end{samplebox}
\caption{Generated sample from \method~trained on OWT, with a total budget of $T=256$: $64$ unmasking steps, corrector frequency $\omega=1$ and $S=3$ steps per loop.}
\label{fig:owt_256_samples}
\end{figure}


\end{document}